

\documentclass[final,1p,times]{elsarticle}

\usepackage{hyperref}

\journal{Journal of \LaTeX\ Templates}









\bibliographystyle{elsarticle-num}

\usepackage{amssymb}
\usepackage[cmex10]{amsmath}
\usepackage{amsmath,amsfonts,amssymb,amsthm,dsfont,stmaryrd}
\usepackage{hyperref}
\usepackage{color}

\newtheorem{theorem}{Theorem}[section]

\newtheorem{lemma}{Lemma}[section]

\newtheorem{definition}{Definition}[section]

\def\s{s}

\def\R{\mathbb{R}}

\def\M{\mathcal{M}}

\def\m{\mathrm{m}}

\def\F{\mathcal{F}}


\def\w{\omega}

\def\x{\mathrm{x}}

\def\ICA{ICA$_{SGG}$}
\def\D{\mathcal{D}}

\def\z{\mathrm{z}}

\def\1{\mathds{1}}

\def\for{\mbox{  for }}

\def\det{\mathrm{det}}

\numberwithin{equation}{section}

\usepackage{algorithmic}
\usepackage{algorithm}

\setlength\fboxsep{0.5pt}
\setlength\fboxrule{0.5pt}

\usepackage{epstopdf}
\usepackage[tight,footnotesize]{subfigure}

\begin{document}

\begin{frontmatter}

\title{ICA based on Split Generalized Gaussian}


\author[mymainaddress]{P. Spurek\corref{mycorrespondingauthor}}
\cortext[mycorrespondingauthor]{Corresponding author}
\ead{przemyslaw.spurek@ii.uj.edu.pl}

\author[mysecondaryaddress]{P. Rola}
\ead{przemyslaw.rola@outlook.com}

\author[mymainaddress]{J. Tabor}
\ead{jacek.tabor@ii.uj.edu.pl}

\author[mysecondaryaddress1]{A. Czechowski}
\ead{aleksander.czechowski@dynniq.com}

\address[mymainaddress]{Faculty of Mathematics and Computer Science, 
Jagiellonian University, 
\L ojasiewicza 6, 
30-348 Cracow, 
Poland}
\address[mysecondaryaddress]{Department of Mathematics of the Cracow University of Economics, 
Rakowicka 27,
31-510 Cracow, 
Poland}
\address[mysecondaryaddress1]{Dynniq B.V.,
Basicweg 16,
3821 BR Amersfoort,
The Netherlands}

\begin{abstract}
Independent Component Analysis (ICA) - one of the basic tools in data analysis - aims to find a coordinate system in 
which the components of the data are independent. 
Most popular ICA methods use kurtosis as a metric of non-Gaussianity to maximize, such as FastICA and
JADE. However, their assumption of fourth-order moment (kurtosis) may not always be satisfied in practice.
One of the possible solution is to use third-order moment (skewness)  instead of kurtosis, which was applied in $ICA_{SG}$ and EcoICA. 

In this paper we present a competitive approach to ICA based on the Split Generalized Gaussian distribution (SGGD), which is well adapted to heavy-tailed as well as asymmetric data. 
Consequently, we obtain a method which works better than the classical 
approaches, in both cases: heavy tails and non-symmetric data.
\end{abstract}

\begin{keyword}
ICA \sep  Split Normal distribution \sep  skewness.
\end{keyword}

\end{frontmatter}

\section{Introduction}

Independent component analysis (ICA) is a popular unsupervised learning method with many applications. ICA has been applied in magnetic resonance \cite{beckmann2004probabilistic}, MRI \cite{beckmann2005tensorial,rodriguez2012noising}, EEG analysis \cite{brunner2007spatial,delorme2007enhanced,zhang2013bayesian},
fault detection \cite{choi2005fault}, financial time series \cite{kiviluoto1998independent} and  seismic recordings \cite{haghighi2008ica}.

In our work we introduce and explore a new approach based on the maximum likelihood estimation framework and the non-symmetric and heavy tailed density distribution -- Split Generalized Gaussian Independent Component Analysis (\ICA). Our work is a generalization of \cite{spurek2017ica}, where $ICA_{SG}$, was constructed. 
The motivation for such modification comes from the observation that it is often profitable to describe data by asymmetric and heavy tailed distributions to obtain a better fit of data.

Before explaining our idea, let us first recall for the convenience of the reader some earlier approaches.
Most ICA methods are based on the maximization of non-Gaussianity. 
\begin{figure}[!h]
\normalsize
\begin{center}
\includegraphics[width=4.5in]{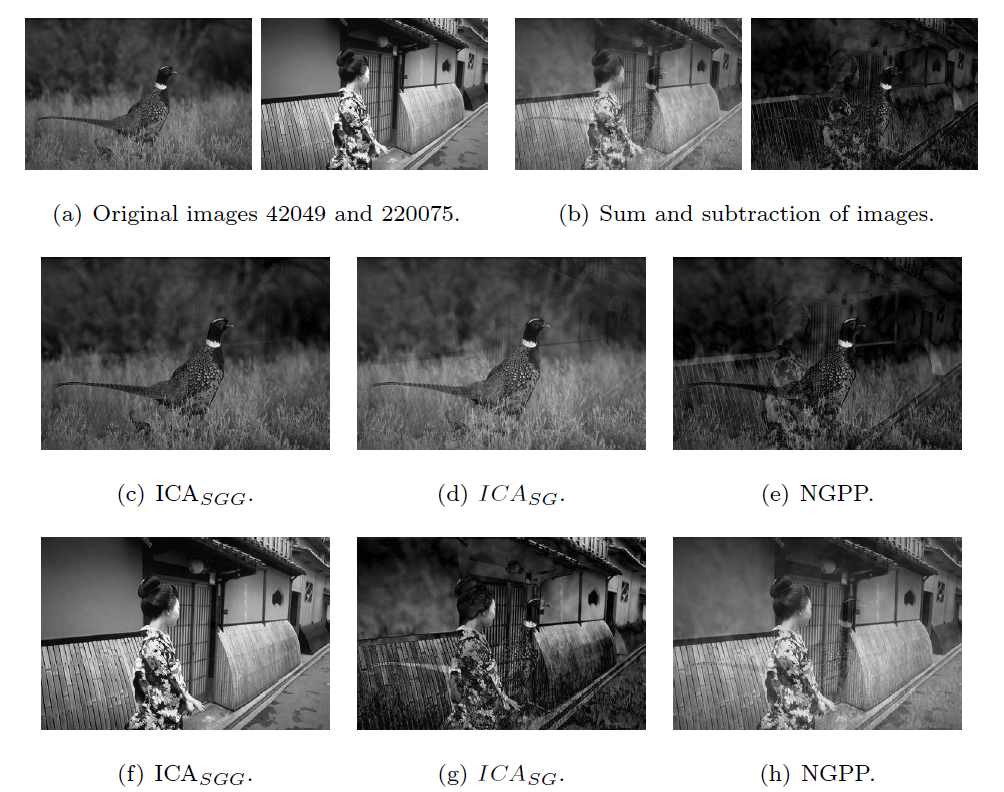}
\end{center}
\caption{Comparison of image separation by our method (\ICA), with $ICA_{SG}$ and NGPP.}
\label{fig:image_ICA_int}
\end{figure}
%
Kurtosis, the classical measure of non-Gaussianity, is used in particular by FastICA \cite{hyvarinen1999fast,helwig2013critique}.
The assumption of kurtosic sources may not always be satisfied in practice.  Typically data sets are bounded, and therefore the credible estimation of tails is not easy. Another problem with these methods, is that they usually assume that the underlying density is symmetric, which is rarely the case.
For weak-kurtosic but skewed sources, such methods could fail \cite{song2016ecoica,spurek2017ica}.
Skewness (the third central moment) is another metric using in ICA. Any symmetric data, in particular the gaussian one, has skewness equal to zero.
One of the most popular ICA methods dedicated for skew data is PearsonICA \cite{karvanen2000pearson,karvanen2002blind}. 

On the other hand, in \cite{spurek2017ica} authors present an approach to ICA based on the maximum likelihood estimation~\cite{pham1997blind}. In such a case we search for the coordinate system optimally fitted to data as well as the marginal
densities  such that the data density factors in the base are the product of marginal densities. 
Authors model skewness using the Split Gaussian distribution, which is well adapted to asymmetric data.  

Unfortunately, all the above approaches work well only for asymmetric and weak-kurtosic source. Our goal is to find a method which is able to work in both situations. 
One of the possible solution is to use a mixture of skewness and kurtosis. In~\cite{blaschke2004cubica,virta2016projection} authors use the projection index which is a combination of third and fourth cumulants. The proposed method gives good results but it is a problem with modeling the proportion between skewness and kurtosis.

In our work we introduce a new approach to ICA in which we approximate the data density by product of 
Split Generalized Gaussian distribution, which allows us to simultaneously model skewness and heavy-tails in data.
Thanks to Theorem \ref{the:min} we reduce the minimization of the maximum likelihood function from five to three parameters. Moreover, in Theorem \ref{main:grad} we give an explicit formula for gradient of the cost function, which allows the use of classical gradient descent method.
Consequently we obtain ICA method which gives essentially better results then classical approaches with similar computational complexity.

We verified \ICA \ in the case of density estimation of images and found the optimal parameters of Logistic, Split Gaussian, Split Generalized Gaussian distributions. In Fig. \ref{fig:MLE} we compared the values of the MLE function. In most of the cases Split Generalized Gaussian distribution fits the data with better precision. 
\begin{figure*}[!h]
\normalsize
\begin{center}
\includegraphics[width=5.0in]{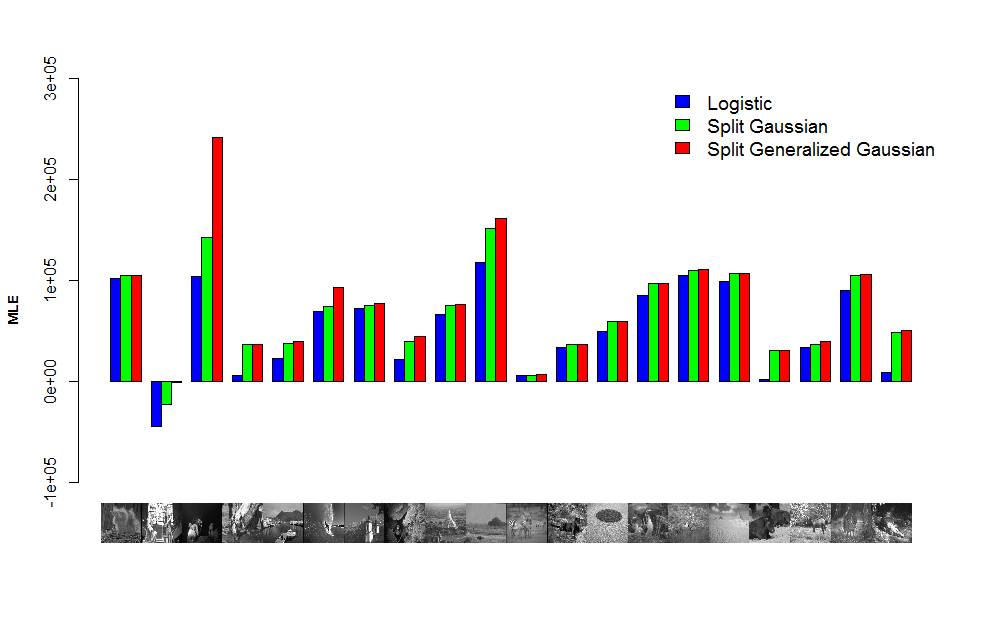} 
\end{center}
\caption{The MLE estimation for image histograms with respect to Logistic, Split Gaussian and Split Generalized Gaussian distributions.}
\label{fig:MLE}
\end{figure*}
%
%

The results of $ICA_{SG}$ \cite{spurek2017ica} (described in our previous article), NGPP \cite{virta2016projection} (which use a combination of third and fourth cumulants) and our method \ICA \ are compared in Fig. \ref{fig:image_ICA_int} for the case of image separation (for more detail comparison we refer to Section \ref{ex}). In the experiment we mixed two images (see Fig. \ref{fig:image_ICA_int}) by adding and subtracting them. Our approach gives essentially better results. In the case of other ICA methods we can see artifacts in the background, which means that the method does not separate signal properly.

This paper is arranged as follows. In the second section, we discuss related works. In the third one, the theoretical background of our approach to ICA is presented. We introduce a cost function which uses the General Split Gaussian distribution and show that it is enough to minimize it respectively to only two parameters: vector $\m \in \R^d$  and $d \times d$ matrix  $W$. We also calculate the gradient of the cost function, which is necessary for the efficient use in the minimization procedure.
The last section describes the numerical experiments. The effects of our algorithm are illustrated on simulated as well as real datasets.

\section{Related works}\label{RW}


Various ICA methods were discussed in the literature \cite{secchi2016hierarchical, hyvarinen2004independent,lee1999independent,cardoso1989source,pham1997blind,comon1994independent} and many practical application were proposed. 
 In signal processing ICA is a computational method for separating a multivariate signal into additive subcomponents and has been applied in magnetic resonance \cite{beckmann2004probabilistic}, MRI \cite{beckmann2005tensorial,rodriguez2012noising}, EEG analysis \cite{brunner2007spatial,delorme2007enhanced,zhang2013bayesian},
fault detection \cite{choi2005fault}, financial time series \cite{kiviluoto1998independent} and  seismic recordings \cite{haghighi2008ica}.
Moreover, it is hard to overestimate the role of ICA in pattern recognition and image analysis; its applications include face recognition \cite{yang2005kernel,dagher2006face}, facial action recognition~\cite{chuang2006recognizing},  image filtering \cite{tsai2006independent}, texture segmentation \cite{jenssen2003independent}, object recognition~\cite{bressan2003using}, image modeling \cite{kim2005iterative}, embedding graphs in pattern-spaces \cite{luo2003spectral,luo2002independent} and feature extraction \cite{lai2014multilinear}. 

The first ICA method was presented by Herault and Jutten around 1983. The authors proposed an iterative real-time algorithm based on a neuro-mimetic architecture~\cite{jutten1991blind}. It is worth mentioning that in their
framework, higher-order statistics were not introduced
explicitly. Giannakis et al. \cite{giannakis1989cumulant} addressed the issue of
identifiability of ICA in 1987 using third-order cumulants. However, the resulting algorithm required an exhaustive search. 

Lacoume and Ruiz \cite{lacoume1992separation} sketched
a mathematical approach to the problem using
higher-order statistics, which can be interpreted as a measure of fitting independent components. 
Cardoso \cite{cardoso1991super,cardoso1999high} focused on the algebraic properties of the fourth-order cumulant (kurtosis), which is still a popular approach~\cite{sharma2006subspace}.

An important measure of fitting independent components is given by negentropy \cite{gaeta1990source}. FastICA \cite{hyvarinen1999fast}, one of the most popular implementations of ICA,  uses this approach.
Negentropy is based on the information-theoretic quantity of (differential) entropy. This concept leads to the mutual information
which is the natural information-theoretic measure of the independence of random variables. Consequently, one can use it as the criterion for finding the ICA transformation \cite{comon1994independent,bell1995information}. It can be shown that minimization of the mutual information is roughly equivalent to maximization of negentropy and it is easier to estimate since we do not need additional parameters.

A somewhat similar approach to ICA is based on the maximum likelihood estimation~\cite{pham1997blind}. It is closely connected to the infomax principle since the likelihood is proportional to the negative of mutual information. 
In recent publications, the maximum likelihood estimation is one of the most popular \cite{hyvarinen2004independent,harroy1996maximum,comon2010handbook,samworth2012independent,zarzoso2006optimal,murillo2004sinusoidal,cardoso2006maximum} approaches to ICA.  In our paper we also use the maximum likelihood framework. 

\section{Maximum likelihood approach to ICA}\label{RW}

Let us now, for the readers convenience, describe how the 
method\footnote{In fact it is one of the possible approaches, as there are many explanations which lead to similar formula.}~works \cite{hyvarinen2000independent}. Suppose that we have a random vector $X$
in $\R^d$ which is generated by the model with the density $F$. Then it is well-known that components of $X$ are independent iff there exist one-dimensional densities $f_1,\ldots,f_d \in \D_\R$, where by $\D_\R$ we denote the set of densities on $\R$, such that
$$
F(\x)=f_1(x_1) \cdot \ldots \cdot f_d(x_d), \for
\x=(x_1,\ldots,x_d) \in \R^d.
$$
Now suppose that the components of $X$ are not independent, but that
we know (or suspect) that there is a basis $A$ (we put $W=A^{-1}$) such that in that base the
components of $X$ become independent. This may be formulated in the form
\begin{equation} \label{eq:gen}
F(\x)=\det(W) \cdot f_1(\w_1^T(\x-\m)) \cdot \ldots \cdot f_d(\w_d^T(\x-\m)) \for x \in \R^d,
\end{equation}
where $\w_i^T(\x-\m)$ is the $i$-th coefficient of $\x-\m$ (the basis is centered in $\m$) in the basis $A$ ($\w_i$ denotes the $i$-th column of $W$).
Observe, that for a fixed family of one-dimensional densities $\F \subset \D_\R$, the set of all densities given by \eqref{eq:gen} for $f_i \in \F$, forms an affine invariant set of densities.

Thus, if we want to find such a basis that components become independent, we need to search for a matrix $W$ and one-dimensional densities such that the approximation 
$$
F(\x) \approx \det(W) \cdot f_1(\w_1^T(\x-\m)) \cdot \ldots \cdot f_d(\w_d^T(\x-\m)), \for \x \in \R^d
$$
is optimal. 

The above approximation can be done using the maximum likelihood estimation, which leads to the flowing formulation of the ICA problem:

\medskip

\noindent{\bf Problem }{\em
Let $X \subset \R^d$ be a data set and $\F \subset \D_\R$ be a set of densities. Find an unmixing matrix $W$, center $\m$, and densities $f_1,\ldots,f_d \in \F$ so that the value 
$$
\frac{1}{|X|}\sum_{i=1}^d \sum_{\x \in X} \ln(f_i(\w_i^T(\x-\m)))+\ln(\det(W)) 
$$
is maximized.
}

It may seem that the most natural choice is Gaussian densities. However, this is not the case as Gaussian densities are affine invariant, and therefore do not ``prefer'' any fixed choice of coordinates\footnote{In fact one can observe that the choice of gaussian densities leads to PCA, if we restrict to the case of orthonormal bases}. In other words we have to choose a family of densities which is distant from Gaussian ones.

In the classical ICA approach it is common to use the super-Gaussian logistic distribution:
$$
f(x; \mu,s) = \frac{e^{\frac{x-\mu}{s}}} {s\left(1+e^{\frac{x-\mu}{s}}\right)^2} =\frac{1}{4s} \operatorname{sech}^2\!\left(\frac{x-\mu}{2s}\right).
$$
The main difference between the gaussian and super-gaussian is the existence of heavy-tails. This can be also viewed as the difference in the fourth moments.

Another approach is based on the Split Gaussian distribution
$$
SN(x;m,\sigma^2,\tau^2) = \left\{ \begin{array}{ll}
c \cdot \exp[-\frac{1}{2\sigma^2}(x-m)^2], & \textrm{where $x\leq m$}\\
c \cdot \exp[-\frac{1}{2\tau^2\sigma^2}(x-m)^2], & \textrm{where $x>m$}\\
\end{array} \right.
$$
where $c=\sqrt{\frac{2}{\pi}}\sigma^{-1}(1+\tau)^{-1}$.
As we see, the split normal distribution arises from merging two opposite halves of two probability density functions of normal distributions in their common mode.
In general, the use of the Split Gaussian distribution allows to fit skew data. 
The main difference between the gaussian and SN can be also viewed as the difference in the third moments.

\begin{figure*}[!t]
\normalsize
\begin{center}
\includegraphics[width=4.5in]{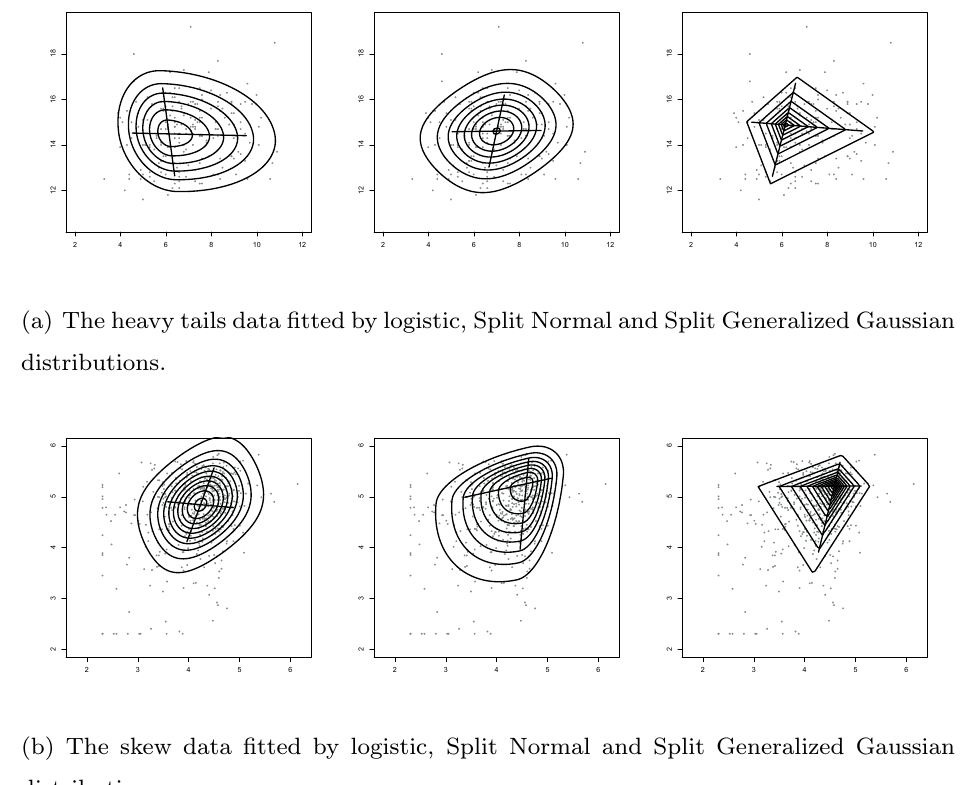} 
\end{center}
\caption{Logistic and Split Normal distributions fitted to data with heavy and skew tails, respectively.}
\label{fig:den_2d}
\end{figure*}


Both of the above choices have advantages and disadvantages. In the first case the model is very sensitive to outliers and the approximation could not give the expected results for asymmetric data. On the other hand, skew model do not fit heavy tails, which are quite common in EEG datasets. 

The idea behind this paper was to choose the model of densities which wouldn't
have the two above disadvantages. So, instead of choosing the family which differs from the Gaussians by the size of tail (fourth moment) or skewness of (third moment), we chose a family which would allow for the estimation of both of the mentioned measures simultaneously -- Split Generalized Gaussian (SGG) distribution \cite{nandi1995extension,tesei1998hos}.

\section{Split Gaussian distribution}\label{SGD}


In this section we present our density model.
Natural directions for extending the normal distribution are to introduce skewness or heavy-tails, and several proposals have indeed emerged, both in the univariate and multivariate case, see \cite{azzalini1985class,azzalini1996multivariate,villani2006multivariate,gibbons1973estimation,nandi1995extension,tesei1998hos,pascal2013parameter}.
One of the most popular approaches for skewness is the Split Normal (SN) distribution\cite{gibbons1973estimation} and for heavy tails is the General Gaussian (GG) distribution \cite{nandi1995extension,tesei1998hos,pascal2013parameter}.

In our paper we use a generalization of the above models, which we call the  Split Generalized Gaussian (SGG) distribution. 
We start from the one-dimensional case. After that, we present a possible generalization of this definition to the multidimensional setting, which corresponds to the formula (\ref{eq:gen}). Contrary to the Split Gaussian distribution, we  skip the assumption of the orthogonality of coordinates (often called principal components), and obtain an ICA model.

\subsection{The one-dimensional case}

Main limitations of normal distribution are its symmetry, and
the capability of controlling the pdf shape only by
measuring the deviation of samples with respect to
the mean without modeling tails.

As it was mentioned, most ICA methods are based on the maximization of non-Gaussianity.
One of the most common and simplest parameters
able to describe deviation from normality is
skewness defined as the third-order central
moment of a stochastic variable.
It was found \cite{tacconi1995new} that the information it can provide is equivalent to that yielded by the combination of two
empirical parameters, the ``left and right variances''.

In order to modify the Gaussian pdf to describe
deviation from symmetry, the left and right variances
were proved to be easier to use than skewness
\cite{azzalini1985class,azzalini1996multivariate,villani2006multivariate}. 
Replacing the variance with two different left and right variances
in Gaussian pdf, gave the asymmetric Split Gaussian model:
$$
SN(x;m,\sigma^2,\tau^2) = \left\{ \begin{array}{ll}
c \cdot \exp[-\frac{1}{2\sigma^2}(x-m)^2], & \textrm{where $x\leq m$}\\
c \cdot \exp[-\frac{1}{2\tau^2\sigma^2}(x-m)^2], & \textrm{where $x>m$}\\
\end{array} \right.
$$
where $c=\sqrt{\frac{2}{\pi}}\sigma^{-1}(1+\tau)^{-1}$.

As we see, the split normal distribution arises from merging two opposite halves of two probability density functions of normal distributions in their common mode.
In general, the use of the Split Gaussian distribution (even in 1D) allows to fit data with better precision (from the likelihood function point of view). 

Another measure of non-Gaussianity in terms of shape is represented by the kurtosis. The parameter is equal to three in the Gaussian case;
the sharpness of the pdf shape is higher (lower) than the corresponding Gaussian function when the parameter is larger (smaller) than three.
A good model for generalized symmetric pdfs has
a variable sharpness. One of the most widely used
symmetric pdf models with a variable sharpness is
the Generalized Gaussian \cite{miller1972detectors,nandi1995extension}

$$
GG(x;m,\alpha,c) = 
\frac{c}{2\alpha \Gamma(1/c)} \exp\left[-\frac{|x-m|^c}{\alpha^c}\right],  
$$
for $m \in \R$ and $\alpha,c\in\R_{+}$ where $\Gamma$ is the standard Gamma function. The parameter $c$, which is theoretical ($c>0$), influences the model sharpness, but cannot be estimated directly from data samples.

The main limitation affecting the generalized Gaussian model is the symmetry.
As the left and right variances were replaced by the
variance in the Gaussian pdf in order to build the
asymmetric Gaussian model, these two parameters
are introduced into the kurtosis-based generalized
Gaussian pdf in a similar way, by transforming it
into the following asymmetric -- Split Generalized Gaussian
model:
$$
SGG(x;m,\sigma_l,\sigma_r,c) = \left\{ \begin{array}{ll}
\frac{c}{(\alpha_l+\alpha_r)\Gamma(1/c)} \exp[-\frac{\vert x-m \vert^c}{\alpha_l^c}] & \textrm{where $x< m$}\\
\frac{c}{(\alpha_l+\alpha_r)\Gamma(1/c)} \exp[-\frac{\vert x-m \vert^c}{\alpha_r^c}] & \textrm{where $x\geq m$}\\
\end{array} \right.
$$
for $m \in \R$ and $\sigma_l,\sigma_r,c\in\R_{+}$.
The relation between $\alpha_l,\alpha_r$ and standard deviations $\sigma_l,\sigma_r$ is
$$
\alpha_i=\sigma_{i}\sqrt{ \frac{\Gamma(1/c)}{\Gamma(3/c)} }, \mbox{ for } i =l,r.
$$

\subsection{Multidimensional Split Gaussian distribution }
A natural generalization of the univariate Generalized Gaussian distribution to the multivariate settings was presented in \cite{pascal2013parameter}.
Roughly speaking, authors assume that a vector $\x \in \R^d$ follows the multivariate Generalized Gaussian distribution, if its principal components are orthogonal and follow the one-dimensional Generalized Gaussian distribution.

In this article we introduce a possible generalization of the Split Generalized Gaussian distribution, but without the assumption of the orthogonality. The construction of the model is similar to the multivariate Split Gaussian distribution presented in \cite{spurek2017ica} for $ICA_{SG}$ method. Thanks to the use of the Split Generalized Gaussian distribution we can model skewness and kurtosis at the same time.

\begin{definition}\label{def:GSN}
A density of the multivariate Split Generalised Gaussian distribution is given by
$$
SGG_{d}(\x;\m,W,\sigma_l,\sigma_r,c)=|\det(W)| \prod_{j=1}^{d} SGG(\w_j^T(\x-\m);0,\sigma_{lj},\sigma_{rj},c),
$$
where $\w_{j}$ is the $j$-th column of non-singular matrix $W$, $\m = (m_1, \ldots, m_d)^T$, $\sigma_l = (\sigma_{l1},\ldots,\sigma_{ld})$, $\sigma_r = (\sigma_{r1},\ldots,\sigma_{rd})$ and $c$ is a constant.\end{definition}

Our model is a natural generalization of the multivariate Generalized Gaussian distribution proposed in \cite{villani2006multivariate} and the multivariate Split Gaussian distribution described in \cite{spurek2017ica}.

The above model is flexible, and allows to fit data with greater precision.  
In the next section we discuss how to estimate optimal parameters in our model. 

\section{Maximum likelihood estimation}

In the previous section we introduced the SGG distribution. 
Now we show how to use the likelihood estimation in our setting. As it was mentioned, we have to maximize the likelihood function with respect to five parameters. In the case of the Split Generalized Gaussian distribution (contrary to the classical Gaussian one) we do not have explicit formulas and consequently we heave to solve the optimization problem.

In the first subsection, we reduce our problem to the simpler one by introducing an auxiliary function~${l}$. Minimization of~${l}$~is equivalent to maximization of the likelihood function.
In the second subsection we present how to minimize our function by using the gradient method.

\subsection{Optimization problem}

The density of the SGG distribution depends on five parameters $\m \in \R^d$, $W \in \M(\R^d)$, $\sigma_l \in \R^d$, $\sigma_r \in \R^d$ and $c \in \R$. 
We can find them by minimizing the simpler function, which depends on only  $m \in \R^d$, $W \in \M(\R^d)$ and $c\in\R$. Other parameters are given by the explicit formulas.    

\begin{theorem}\label{the:min}
Let $\x_1,\ldots,\x_n$ be given.  
Then the likelihood maximized w.r.t. $\sigma_l$ and $\sigma_r$ is
\begin{equation}\label{eq:1}
 \hat{L}(X;\m,W,c) = \bigg( \frac{\kappa n}{c e} \bigg)^{\frac{dn}{c}} \Big( |\det(W)|^{-\frac{c}{c+1}} \prod\limits_{j=1}^{d} g_j(\m,W) \Big)^{-\frac{n(c+1)}{c}}  
\end{equation}
where $\kappa = \left[ \frac{1}{c} \Gamma(\frac{1}{c}) \right]^{-c}$ and
$$
\begin{array}{c}
{g}_{j}(\m,W,c) = {s}_{1j}^{\frac{1}{c+1}} + {s}_{2j}^{\frac{1}{c+1}},
\\[1ex]
{s}_{1j}= \! \sum\limits_{i \in I_j}\vert \w_{j}^T (\x_i-\m) \vert^c,  {I}_j=\{ i = 1,\ldots,n \colon \w_{j}^T (\x_i-\m) \leq 0 \},
\\[1ex]
{s}_{2j}= \! \sum\limits_{i \in I_j^c}\vert \w_{j}^T (\x_i-\m) \vert^c, {I}_j^{'}=\{ i = 1,\ldots,n \colon  \w_{j}^T (\x_i-\m) > 0 \},
\end{array}
$$
and the maximum likelihood estimators of $\alpha_{lj}$, $\alpha_{rj}$ are
$$
\hat{\alpha}_{lj} = \hat{\sigma}_{lj} \sqrt{\frac{\Gamma(\frac{1}{c})}{\Gamma(\frac{3}{c})}} \qquad \text{and} \qquad \hat{\alpha}_{rj} = \hat{\sigma}_{rj} \sqrt{\frac{\Gamma(\frac{1}{c})}{\Gamma(\frac{3}{c})}}
$$
where the estimators of $\sigma_l$ and $\sigma_r$ are given by
$$
\hat{\sigma}_{lj}^c(\m,W) = 
\tfrac{c}{n} \beta^{\frac{c}{2}} s_{1j}^{\frac{c}{c+1}} g_{j}(\m,W), \qquad \hat{\tau}_{j}(\m,W) = \bigg( \frac{s_{2j}}{s_{1j}} \bigg)^{\frac{1}{c+1}}
$$
and
$$
\hat{\sigma}_{rj}^c(\m,W) = \hat{\sigma}_{lj}^c(\m,W) \cdot \hat{\tau}^c_{j}(\m,W) =
\tfrac{c}{n} \beta^{\frac{c}{2}}  s_{2j}^{\frac{c}{c+1}} g_{j}(\m,W),
$$
where $\beta=\frac{\Gamma(\frac{3}{c})}{\Gamma(\frac{1}{c})}$.

\end{theorem}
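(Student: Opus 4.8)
The plan is to write out the log-likelihood of the SGG model, optimise it analytically over the scale parameters $\sigma_l,\sigma_r$ (equivalently $\alpha_l,\alpha_r$) while holding $\m,W,c$ fixed, and then substitute the optimisers back to obtain the profiled likelihood in \eqref{eq:1}. First I would take the product of the densities in Definition \ref{def:GSN} over $\x_1,\dots,\x_n$ and pass to logarithms, obtaining
$$
\ln L = n\ln|\det(W)| + \sum_{j=1}^d \sum_{i=1}^n \ln SGG\big(\w_j^T(\x_i-\m);0,\sigma_{lj},\sigma_{rj},c\big).
$$
Since the one-dimensional SGG density uses the left branch exactly when $\w_j^T(\x_i-\m)\le 0$ and the right branch otherwise, the inner sum splits along the index sets $I_j$ and its complement $I_j^c$, and collapses to
$$
n\big(\ln c-\ln\Gamma(\tfrac1c)\big)-n\ln(\alpha_{lj}+\alpha_{rj})-\frac{s_{1j}}{\alpha_{lj}^c}-\frac{s_{2j}}{\alpha_{rj}^c},
$$
with $s_{1j},s_{2j}$ as in the statement. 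Crucially the coordinates $j$ decouple, so it suffices to maximise each summand over $(\alpha_{lj},\alpha_{rj})$ separately.

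Next I would impose the first-order conditions. Writing $a=\alpha_{lj}$, $b=\alpha_{rj}$ and suppressing $j$, setting the two partial derivatives to zero gives
$$
\frac{n}{a+b}=\frac{c\,s_1}{a^{c+1}}=\frac{c\,s_2}{b^{c+1}}.
$$
The second equality yields at once $b/a=(s_2/s_1)^{1/(c+1)}=\hat\tau_j$, which is the ratio formula. Substituting $b=a\hat\tau_j$ into $n\,a^{c+1}=c\,s_1(a+b)$ and using the identity $1+\hat\tau_j=g_j\,s_1^{-1/(c+1)}$ then solves for $\hat\alpha_{lj}^c=\tfrac{c}{n}s_{1j}^{c/(c+1)}g_j$; the symmetric computation gives $\hat\alpha_{rj}^c=\tfrac{c}{n}s_{2j}^{c/(c+1)}g_j$. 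Converting through the relation $\sigma_i^c=\alpha_i^c\,\beta^{c/2}$, with $\beta=\Gamma(\tfrac3c)/\Gamma(\tfrac1c)$, produces exactly the stated estimators $\hat\sigma_{lj},\hat\sigma_{rj}$.

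Finally I would back-substitute. At the optimum the two data-fit terms simplify neatly, $s_1/\hat\alpha_{lj}^c+s_2/\hat\alpha_{rj}^c=n/c$ (the shared $(a+b)$ factor cancels against $g_j$), so the per-coordinate maximum equals $n(\ln c-\ln\Gamma(\tfrac1c))-n\ln(\hat\alpha_{lj}+\hat\alpha_{rj})-n/c$. A short computation shows $\hat\alpha_{lj}+\hat\alpha_{rj}=(c/n)^{1/c}g_j^{(c+1)/c}$, which turns the logarithm into a clean multiple of $\ln g_j$ plus constants in $c$ and $n$. Summing over $j$, reinstating $n\ln|\det(W)|$, collecting the $\Gamma$-, $c$-, $n$- and $e$-dependent constants into $\kappa=[\tfrac1c\Gamma(\tfrac1c)]^{-c}$, and exponentiating reproduces \eqref{eq:1}.

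The main obstacle is the coupled stationarity system: because the normaliser $(\alpha_{lj}+\alpha_{rj})^{-1}$ ties the two scale parameters together, the equations are not separable, and one must exploit the proportionality $a^{c+1}/s_1=b^{c+1}/s_2$ to decouple them before solving. The rest is essentially bookkeeping — tracking the exponents $c/(c+1)$ and the constant $\kappa$ through the substitution so that the final expression matches \eqref{eq:1} exactly. I would also record that the interior critical point is genuinely the maximiser: on the positive quadrant the objective tends to $-\infty$ both as $a$ or $b\to 0^+$ (through the $-s_{ij}/\cdot^{\,c}$ terms) and as $a+b\to\infty$ (through $-n\ln(a+b)$), so a maximum is attained in the interior, and uniqueness of the critical point found above identifies it as global.
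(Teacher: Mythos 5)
Your proposal is correct and follows essentially the same route as the paper's Appendix~A proof: form the log-likelihood, note that the coordinates $j$ decouple, solve the first-order conditions via the ratio substitution $b/a=(s_{2j}/s_{1j})^{1/(c+1)}$, and back-substitute to obtain the profiled likelihood \eqref{eq:1}; working in the $\alpha$-parametrization instead of the $\sigma$-parametrization with the $\beta^{c/2}$ factor is an equivalent bookkeeping choice. Your closing coercivity argument (the objective tends to $-\infty$ as $\alpha_{lj},\alpha_{rj}\to 0^{+}$ or $\alpha_{lj}+\alpha_{rj}\to\infty$, valid when $s_{1j},s_{2j}>0$) is a small addition the paper omits, and it properly justifies that the unique stationary point is the global maximizer.
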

\begin{proof}
See Appendix \ref{App:A}.
\end{proof}

Thanks to the above theorem, instead of looking for the maximum of the likelihood function, it is enough to obtain the maximum of the simpler function~(\ref{eq:1}) which depends on three parameters $\m \in \R^d$, $W \in \M(\R^d)$ and $c \in \R$.

%
%
%
%
%

\subsection{Gradient}

One of the possible methods of optimization is the gradient method. For simplicity we calculate gradient of the log-likelihood function.

In the first step we introduce the following function
\begin{equation}\label{equ:ll}
{l}(X;\m,W,c) = |\det(W)|^{-\frac{c}{c+1}} \prod\limits_{j=1}^{d} g_j(\m,W),
\end{equation}
where $\w_{j}$ stands for the $j$-th column of matrix $W$. Let us notice that
\begin{equation}
\ln \hat{L}(X;\m,W,c) = \frac{dn}{c} \ln \left( \frac{\kappa n}{ce} \right) - \frac{n(c+1)}{c} \ln {l}(X;\m,W,c)
\end{equation}

We calculate a gradient of $l$ and then we show the final result.

\begin{theorem}\label{ther:grad}
Let $X \subset \R^d$, $\m = (\m_1, \ldots, \m_d)^T \in \R^d$, $W = (\w_{ij})_{1 \leq i,j \leq d}$ non-singular be given. 
Then
$\nabla_{\m}  \ln {l}(X;\m,W,c) = \left(  \frac{\partial \ln {l}(X;\m,W,c)}{\partial \m_1}, \ldots, \frac{\partial \ln {l}(X;\m,W,c)}{\partial \m_d} \right)^T$,
where $\frac{\partial \ln {l}(X;\m,W)}{\partial \m_k} =$
$$
\begin{array}{l}
\frac{c}{c+1} \sum\limits_{j=1}^d \frac{1}{{s}_{1j}^{\frac{1}{c+1}} + {s}_{2j}^{\frac{1}{c+1}}} \bigg(
{s}_{1j}^{- \frac{c}{c+1}} \sum\limits_{i \in {I}_j} \vert \omega^T_j (\x_i - \m) \vert^{c-1} \omega_{jk} - 
{s}_{2j}^{- \frac{c}{c+1}} \sum\limits_{i \in I_j^{'}} \vert \omega^T_j (\x_i - \m) \vert^{c-1} \omega_{jk}
\bigg).
\end{array}
$$
Moreover,
$
\nabla_{W} \ln {l}(X;\m,W,c) = \left[ \frac{\partial \ln l(X;\m,W,c)}{\partial \w_{pk}}  \right]_{1 \leq p,k \leq d},
$
where
$$
\begin{array}{l}
\frac{\partial \ln l(X;\m,W,c)}{\partial \w_{pk}}  = -\frac{c}{c+1}  (\w^{-1})^T_{pk} + \frac{1}{{s}_{1p}^{\frac{1}{c+1}} +{s}_{2p}^{\frac{1}{c+1}}}
 \bigg(
- \frac{c}{c+1} {s}_{1p}^{-\frac{c}{c+1}} \sum\limits_{ i \in {I}_p} \vert \w^T_p (\x_i - \m) \vert^{c-1} (\x_{ik} - \m_k) \\[6pt]
+ \frac{c}{c+1} {s}_{2p}^{-\frac{c}{c+1}} \sum\limits_{ i \in {I}_p^{'}} \vert \w^T_p (\x_i - \m) \vert^{c-1} (\x_{ik} - \m_k) \bigg).
\end{array}
$$
and
$$
\begin{array}{c}
{s}_{1j}= \! \sum\limits_{i \in I_j}\vert \w_{j}^T (-\x_i+\m) \vert^c, \qquad {I}_j=\{ i = 1,\ldots,n \colon \w_{j}^T (\x_i-\m) \leq 0 \},
\\[1ex]
{s}_{2j}= \! \sum\limits_{i \in I_j^{'} }\vert \w_{j}^T (-\x_i+\m) \vert^c, \qquad {I}_j^{'}=\{ i = 1,\ldots,n \colon  \w_{j}^T (\x_i-\m) > 0 \}.
\end{array}
$$
Finally
$$
\begin{array}{c}
\frac{\partial \ln l(X;\m,W,c)}{\partial c} = - \frac{1}{(c+1)^2} \ln |\det(W)| + \\[6pt]
\sum\limits_{j=1}^d \frac{1}{ {s}_{1j}^{\frac{1}{c+1}} + {s}_{2j}^{\frac{1}{c+1}} } 
\bigg(
\frac{1}{c+1} {s}_{1j}^{-\frac{c}{c+1}} \frac{\partial {s}_{1j} }{\partial c} - \frac{{s}_{1j}^{\frac{1}{c+1}}}{(c+1)^2} \ln {s}_{1j} +
\frac{1}{c+1} {s}_{2j}^{-\frac{c}{c+1}} \frac{\partial {s}_{2j} }{\partial c} - \frac{{s}_{2j}^{\frac{1}{c+1}}}{(c+1)^2} \ln {s}_{2j}
\bigg)
\end{array}
$$
where
$$
\begin{array}{c}
\frac{\partial {s}_{1j} }{\partial c} = \! \sum\limits_{i \in I_j} \vert \w_{j}^T (\x_i-\m) \vert^c \ln \vert \w_{j}^T (\x_i-\m) \vert,
\\[1ex]
\frac{\partial {s}_{2j} }{\partial c} = \! \sum\limits_{i \in I_j^{'}} \vert \w_{j}^T (\x_i-\m) \vert^c \ln \vert \w_{j}^T (\x_i-\m) \vert.
\end{array}
$$
\end{theorem}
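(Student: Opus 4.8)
The plan is to reduce the whole computation to a single application of the chain rule by first passing to the logarithm, which converts the product in \eqref{equ:ll} into a sum. Writing $u_{ij} := \w_j^T(\x_i-\m)$, we have
$$
\ln l(X;\m,W,c) = -\tfrac{c}{c+1}\ln|\det(W)| + \sum_{j=1}^d \ln g_j(\m,W,c), \qquad g_j = s_{1j}^{\frac{1}{c+1}} + s_{2j}^{\frac{1}{c+1}},
$$
so each of the three gradients splits into a contribution from the determinant prefactor and a sum of contributions $\frac{1}{g_j}\,\partial g_j$ coming from the one-dimensional factors. Since $\partial g_j = \frac{1}{c+1}\big(s_{1j}^{-c/(c+1)}\,\partial s_{1j} + s_{2j}^{-c/(c+1)}\,\partial s_{2j}\big)$ for any of the variables, everything rests on differentiating the two power sums $s_{1j} = \sum_{i\in I_j}|u_{ij}|^c$ and $s_{2j} = \sum_{i\in I_j'}|u_{ij}|^c$.

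For the $\m$- and $W$-derivatives I would use the elementary identity $\partial_\bullet |u|^c = c\,|u|^{c-1}\,\mathrm{sgn}(u)\,\partial_\bullet u$ applied to $u = u_{ij}$, together with $\partial u_{ij}/\partial \m_k = -\w_{jk}$ and $\partial u_{ij}/\partial \w_{pk} = (\x_{ik}-\m_k)$ (in the index convention of the statement). The point that needs attention is the sign: on $I_j$ one has $u_{ij}\le 0$, so $\mathrm{sgn}(u_{ij})=-1$, whereas on $I_j'$ one has $u_{ij}>0$, so $\mathrm{sgn}(u_{ij})=+1$, and this is exactly what produces the opposite signs in front of the $s_{1j}$- and $s_{2j}$-sums. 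The determinant term drops out of the $\m$-gradient, and for the $W$-gradient it is handled by Jacobi's formula $\partial\ln|\det(W)|/\partial \w_{pk} = (W^{-1})^T_{pk}$, yielding the leading $-\tfrac{c}{c+1}(W^{-1})^T_{pk}$ term; note also that only the $p$-th factor $g_p$ depends on the entries of the relevant column, so the sum over $j$ collapses to a single term. Assembling these pieces and dividing by $g_j$ gives the stated formulas.

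For the $c$-derivative the extra bookkeeping is that $c$ enters in three places: the prefactor $\tfrac{c}{c+1}$ of $\ln|\det(W)|$, the exponent $\tfrac{1}{c+1}$ inside each $g_j$, and the base, since $s_{1j},s_{2j}$ themselves contain $|u_{ij}|^c$. I would write $s^{1/(c+1)} = \exp\!\big(\tfrac{1}{c+1}\ln s\big)$ and differentiate, which produces both the exponent term $-\tfrac{1}{(c+1)^2}\,s^{1/(c+1)}\ln s$ and the base term $\tfrac{1}{c+1}\,s^{-c/(c+1)}\,\partial_c s$, while $\partial_c s_{1j} = \sum_{i\in I_j}|u_{ij}|^c\ln|u_{ij}|$ (and analogously for $s_{2j}$) follows from $\partial_c|u|^c = |u|^c\ln|u|$. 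Differentiating $\tfrac{c}{c+1}$ gives the factor $\tfrac{1}{(c+1)^2}$ in front of $\ln|\det(W)|$. Collecting the terms reproduces the claimed expression.

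The one genuinely delicate point — and the step I would argue most carefully — is that the index sets $I_j$ and $I_j'$ are themselves functions of $(\m,W)$ through the sign of $u_{ij}$, so $s_{1j},s_{2j}$ have ``moving'' ranges of summation. At parameter values for which $u_{ij}\neq 0$ for all $i$ (the generic case), each partition $I_j,I_j'$ is locally constant, so termwise differentiation with fixed index sets is legitimate and gives the formulas above. At the exceptional parameters where some $u_{ij}=0$, a point may migrate between $I_j$ and $I_j'$; but for $c>1$ the summand $|u_{ij}|^c$ and its first derivative both vanish as $u_{ij}\to 0$, so such crossings contribute nothing and the formulas extend by continuity. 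This is what justifies treating $I_j$ and $I_j'$ as constant throughout the differentiation.
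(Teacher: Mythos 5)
Your proof is correct and follows essentially the same route as the paper's Appendix B: take the logarithm to split $\ln l$ into the determinant term plus $\sum_{j}\ln g_j$, differentiate the power sums $s_{1j},s_{2j}$ termwise with the sign analysis on $I_j$ versus $I_j'$ (equivalently $\partial|u|^c = c|u|^{c-1}\operatorname{sgn}(u)\,\partial u$), invoke Jacobi's formula for $\partial\ln|\det(W)|/\partial \w_{pk}$ with the observation that only the $j=p$ factor survives, and obtain the $c$-derivative from both the exponent $\tfrac{1}{c+1}$ and the base $|u_{ij}|^c$, yielding $\partial_c s_{1j}=\sum_{i\in I_j}|u_{ij}|^c\ln|u_{ij}|$. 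Your final paragraph justifying why the parameter-dependent index sets $I_j,I_j'$ may be held fixed during differentiation addresses a subtlety the paper's proof passes over in silence, so it is added rigor rather than a different method.
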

\begin{proof}
See Appendix \ref{App:B}.
\end{proof}

Now we are ready to calculate the gradient of the log-likelihood function.

\begin{theorem}\label{main:grad}
Let $X \subset \R^d$, $c \in \R$, $\m = (\m_1, \ldots, \m_d)^T \in \R^d$, $W = (\w_{ij})_{1 \leq i,j \leq d}$ non-singular be given. 
Then
\begin{equation}
\nabla_{\m}  \ln \hat{L}(X;\m,W,c) = \left(  \frac{\partial \ln \hat{L}(X;\m,W,c)}{\partial \m_1}, \ldots, \frac{\partial \ln \hat{L}(X;\m,W,c)}{\partial \m_d} \right)^T,
\end{equation}
where
\begin{equation}
\frac{\partial \ln \hat{L}(X;\m,W,c)}{\partial \m_k} = - \frac{n(c+1)}{c} \frac{\partial \ln {l}(X;\m,W,c)}{\partial \m_k}.
\end{equation}
Moreover,
$
\nabla_{W} \ln \hat{L}(X;\m,W,c) = \left[ \frac{\partial \ln \hat{L}(X;\m,W,c)}{\partial \w_{pk}}  \right]_{1 \leq p,k \leq d},
$
where
\begin{equation}
\frac{\partial \ln \hat{L}(X;\m,W,c)}{\partial \w_{pk}} = - \frac{n(c+1)}{c} \frac{\partial \ln {l}(X;\m,W,c)}{\partial \w_{pk}}
\end{equation}
Finally $\frac{\partial \ln \hat{L}(X;\m,W,c)}{\partial c} = $
\begin{equation}
\frac{dn}{c^2} \bigg[ \ln \left( \frac{ce}{n} \right) - 1 + c + \psi \left( \frac{1}{c} \right) \bigg] + \frac{n}{c^2} \ln {l}(X;\m,W,c) - \frac{n(c+1)}{c} \frac{\partial \ln {l}(X;\m,W,c)}{\partial c}
\end{equation}
and $\psi$ is the so-called digamma function, i.e. $\psi (x) = \frac{\Gamma^{'}(x)}{\Gamma(x)}$.
\end{theorem}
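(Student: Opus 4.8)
The plan is to differentiate the identity
$$\ln \hat{L}(X;\m,W,c) = \frac{dn}{c} \ln \left( \frac{\kappa n}{ce} \right) - \frac{n(c+1)}{c} \ln {l}(X;\m,W,c),$$
which already relates $\ln \hat{L}$ to the auxiliary function $l$, and to feed in Theorem \ref{ther:grad} for the partials of $\ln l$. The three cases then split naturally according to how the parameter enters.

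First, for the derivatives in $\m_k$ and in $\w_{pk}$, I would observe that $\kappa = [\frac{1}{c}\Gamma(\frac{1}{c})]^{-c}$, so the entire first summand $\frac{dn}{c}\ln(\frac{\kappa n}{ce})$ depends only on $c$ and is constant in both $\m$ and $W$. Differentiating therefore annihilates this term, and the chain rule applied to the second summand produces exactly the factor $-\frac{n(c+1)}{c}$ multiplying $\frac{\partial \ln l}{\partial \m_k}$ (respectively $\frac{\partial \ln l}{\partial \w_{pk}}$). These two cases are immediate once Theorem \ref{ther:grad} is in hand.

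The substantive part is the $c$-derivative, where both summands vary. For the second summand $B = -\frac{n(c+1)}{c}\ln l$, I would write $\frac{n(c+1)}{c} = n + \frac{n}{c}$ and apply the product rule, obtaining $\frac{\partial B}{\partial c} = \frac{n}{c^2}\ln l - \frac{n(c+1)}{c}\frac{\partial \ln l}{\partial c}$, which accounts for the last two terms of the statement. For the first summand $A = \frac{dn}{c}\,u(c)$ with $u(c) = \ln(\frac{\kappa n}{ce})$, I would expand $\ln \kappa = -c\ln[\frac{1}{c}\Gamma(\frac{1}{c})] = c\ln c - c\ln\Gamma(\frac{1}{c})$, so that $u(c) = c\ln c - c\ln\Gamma(\frac{1}{c}) + \ln n - \ln c - 1$, and then differentiate using $\frac{\partial}{\partial c}\ln\Gamma(\frac{1}{c}) = -\frac{1}{c^2}\psi(\frac{1}{c})$. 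The product rule gives $\frac{\partial A}{\partial c} = \frac{dn}{c}u'(c) - \frac{dn}{c^2}u(c)$; after collecting terms the $\ln c$ and $\ln\Gamma(\frac{1}{c})$ contributions cancel in pairs, leaving $\frac{dn}{c^2}[\ln(\frac{ce}{n}) - 1 + c + \psi(\frac{1}{c})]$, the first term of the statement.

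The main obstacle is precisely this bookkeeping in the $c$-derivative: the parameter $c$ enters $A$ through the prefactor $\frac{dn}{c}$, through the explicit $\ln c$, and through $\kappa$, and the digamma function emerging from $\frac{\partial}{\partial c}\ln\Gamma(\frac{1}{c})$ must be tracked carefully so that the logarithmic-Gamma terms annihilate and the constant $e$ reassembles inside $\ln(\frac{ce}{n})$. Everything else reduces to the chain rule together with the already-established Theorem \ref{ther:grad}.
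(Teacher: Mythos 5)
Your proposal is correct and matches the paper's own proof in all essentials: both differentiate the identity $\ln \hat{L} = \frac{dn}{c}\ln\!\left(\frac{\kappa n}{ce}\right) - \frac{n(c+1)}{c}\ln l$, treat the $\m$- and $W$-derivatives as immediate consequences of Theorem \ref{ther:grad} since the first summand is constant in those variables, and handle the $c$-derivative by expanding $\kappa = \left[\frac{1}{c}\Gamma(\frac{1}{c})\right]^{-c}$ so that the digamma term emerges from $\frac{\partial}{\partial c}\ln\Gamma(\frac{1}{c}) = -\frac{1}{c^2}\psi(\frac{1}{c})$. The only cosmetic difference is that you keep $A = \frac{dn}{c}u(c)$ and apply the product rule, whereas the paper first distributes the prefactor into the fully expanded logarithm before differentiating; both recombine to $\frac{dn}{c^2}\left[\ln\left(\frac{ce}{n}\right) - 1 + c + \psi\left(\frac{1}{c}\right)\right]$.
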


\begin{proof}\ref{main:grad}
Recall that

$$\ln \hat{L} = \frac{dn}{c} \ln \left( \frac{\kappa n}{ce} \right) - \frac{n(c+1)}{c} \ln l$$
Then
$$
\begin{array}{l}
\frac{\partial \ln \hat{L}}{\partial c} = \frac{\partial A}{\partial c} + \frac{n}{c^2} \ln {l} - \frac{n(c+1)}{c} \frac{\partial \ln {l}}{\partial c}
\end{array}
$$
where $A = \frac{dn}{c} \ln \left( \frac{\kappa n}{ce} \right)$. Let us calculate $\frac{\partial A}{\partial c}$. Notice that
$$
\begin{array}{l}
A = \frac{dn}{c} \ln \left[ \frac{n}{ce} \left( \frac{1}{c} \Gamma(\frac{1}{c}) \right)^{-c} \right] = \frac{dn}{c} \ln \left( \frac{n}{ce} \right) - dn \ln \left( \frac{1}{c} \Gamma(\frac{1}{c}) \right) = \\
\frac{dn}{c} \ln \left( \frac{n}{e} \right) - \frac{dn}{c} \ln c - dn \ln \Gamma(\frac{1}{c}) + dn \ln c.
\end{array}
$$
Then
$$
\begin{array}{l}
\frac{\partial A}{\partial c} = -\frac{dn}{c^2} \ln \left( \frac{n}{e} \right) + \frac{dn}{c^2} \ln c - \frac{dn}{c^2} - \frac{dn}{\Gamma(\frac{1}{c})} \Gamma^{'}(\frac{1}{c}) \cdot (-\frac{1}{c^2}) + \frac{dn}{c} =\\
\frac{dn}{c^2} \left[ \ln c -1 + c - \ln (\frac{n}{e}) + \frac{\Gamma^{'}(\frac{1}{c})}{\Gamma(\frac{1}{c})} \right].
\end{array}
$$

\end{proof}

Thanks to the above Theorem we can use gradient descent, a first-order optimization algorithm. To find a local maximum of the cost function using gradient descent, one takes steps proportional to the gradient of the function at the current point. 

\begin{figure*}[t!]
\normalsize
\begin{center}
\includegraphics[width=4.5in]{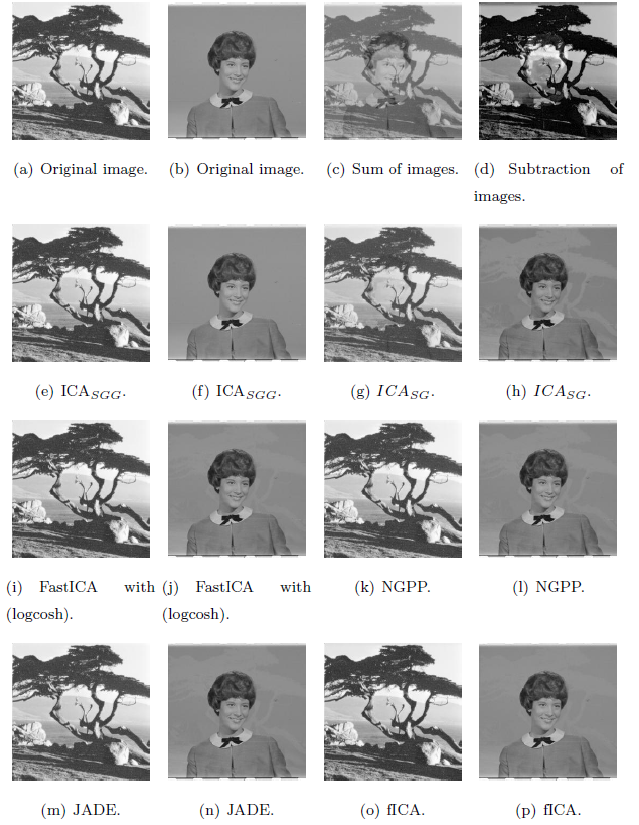} 
\end{center}
\caption{Results of image separation with the use of various ICA algorithms.}
\label{fig:image_ICA_1}
\end{figure*}

\section{Experiments and analysis}\label{ex}

To compare our method to classical ones we use 
Tucker's congruence coefficient \cite{lorenzo2006tucker}
(uncentered correlation) defined by
$$
Cr(\s_i, \bar \s_i) = \frac{ \sum_{j=1}^d s^j_i s^j_i}{ \sqrt{\sum_{j=1}^d s^j_i \sum_{j=1}^d \bar s^j_i } }.
$$
Its values range between $-1$ and $+1$. It can be used to study the similarity of extracted factors across different samples. Generally, a congruence coefficient of $0.9$ indicates a high degree of factor similarity, while a coefficient of $0.95$ or higher indicates that the factors are virtually identical.

We can also verify the quality of recomputing mixing matrix. 
The Amari-Cichocki-Yang (ACY) error is an asymmetric measure of dissimilarity between two
nonsingular square matrices. The ACY error is invariant to permutation and rescaling of the
columns, ranges between $0$ and $N-1$, and equals $0$ if and only if matrices are
identical up to column permutations and rescaling.

The ACY error is defined as
$$
ACY(A_1,A_2)= \frac{  \sum \limits_{i=1}^n \left( \frac{ \sum_{j=1}^d |b_{ij}| }{ \max_{j} |b_{ij}| }  -1 \right)
+ \sum \limits_{j=1}^n \left( \frac{ \sum_{i=1}^d |b_{ij}| }{\max_{i} |b_{ij}| }  -1 \right) }{2n},
$$
where $b_{ij} = (A_1^{-1}A_2)_{ij}$.


We evaluate our method in the context of images, sound 
and EEG data.  
For comparison we use R packages {\tt ica} \cite{ica}, {\tt PearsonICA} \cite{pearsonica}, {\tt ProDenICA} \cite{prodenica}, {\tt tsBSS} \cite{tsBSS}, {\tt fICA} \cite{fICA}, {\tt ICtest} \cite{ICtest}.
The most popular method used in practice is FastICA \cite{hyvarinen1999fast,helwig2013critique} algorithm, which uses negentropy. In this context we can use three different functions to estimate neg-entropy:
logcosh, exp and kurtosis.
We also compare our method with algorithm using Information-Maximization (Infomax) approach \cite{bell1995information}. Similarly to FastICA we consider three possible non-linear functions: hyperbolic tangent, logistic and extended Infomax.
We also consider algorithm which uses Joint Approximate Diagonalization of Eigenmatrices (JADE) proposed by Cardoso and Souloumiac's \cite{cardoso1993blind,cardoso1993blind,helwig2013critique}.

One of the most popular ICA methods dedicated for skew data is PearsonICA \cite{karvanen2000pearson,karvanen2002blind}, which minimizes mutual information using a Pearson \cite{stuart1968advanced} system-based parametric model. Another model we consider is ProDenICA \cite{bach2002kernel,hastie2009elements}, which is based not on a
single nonlinear function, but on an entire function space of candidate nonlinearities. In particular, the method works with the functions in a reproducing kernel Hilbert space, and make use of the “kernel trick” to search over this space efficiently. 
We also compare our method with  NGPP ~\cite{virta2016projection}, which uses the projection index which is a combination of third and fourth cumulants.

\begin{figure*}[!t]
\normalsize
\begin{center}
\subfigure[Dependence of the number of data set instances.] {\label{fig:time_1}
\includegraphics[width=2.1in]{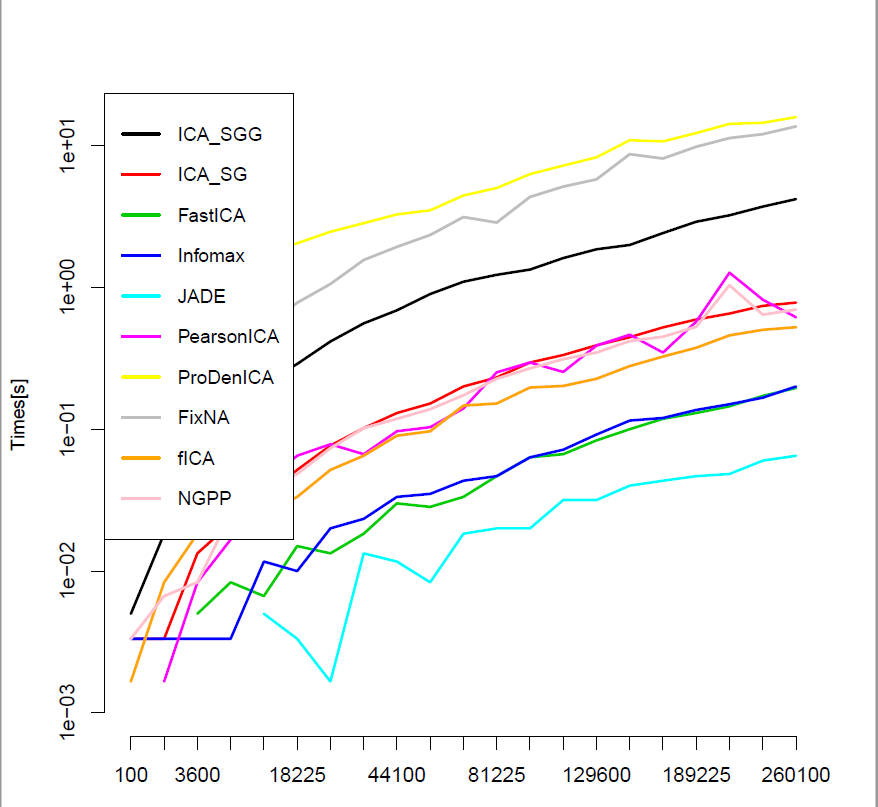} 
} \quad
\subfigure[Dependence of the dimension of data.] {\label{fig:time_2}
\includegraphics[width=2.1in]{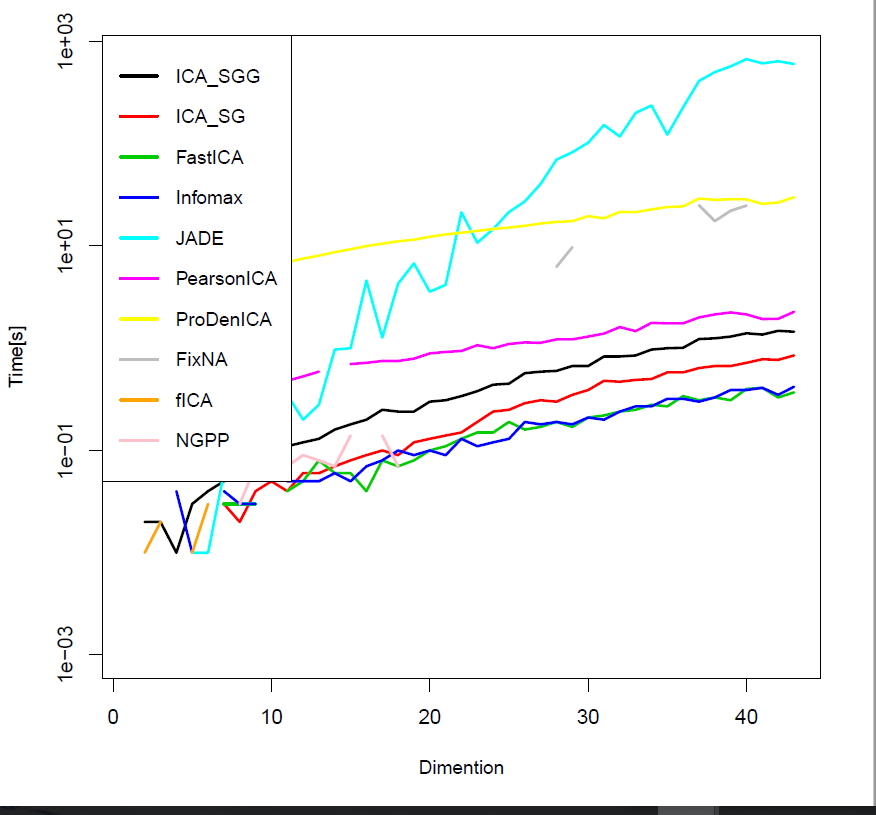}
}
\end{center}
\caption{Comparison of computational efficiency between \ICA{} and classical ICA methods (Time axis is given in the logarithmic scale).}
\label{fig:time}
\end{figure*}

\subsection{Computational efficiency}

First, we verify the computational times of \ICA{} and alternative ICA algorithms. We examine the influence on the number of data set instances and dimension of data. 

We consider the classical image separation problem, where two images are mixed together. We use ten mixed examples and present mean evaluation times. To vary the size of data, images are scaled to different sizes, and running times of the algorithms are reported in each case. One can observe in Figure~\ref{fig:time_1} that \ICA{} is a little bit slower than NGPP but gives comparable results.
 
To examine the influence of data dimension on the evaluation time we also take into account the classical image separation problem, but we change the number of components from 2 to 40. \ICA{} has similar complexity as state of the art method, see Figure~\ref{fig:time_2}. FastICA, Infomax and JADE are the most effective, but do not solve the problem of image separation sufficiently well, see Fig. \ref{fig:rank_img}. On the other hand, the ProDenICA and NGPP which gives comparable result to \ICA{}, have comparable computational time.

%
\subsection{Separation of images}

One of the most popular application of ICA is the separation of images. 
In our experiments we use three hundred images from: the USC-SIPI Image Database (of size $256 \times 256$ pixels and $512 \times 512$ pixels), and from Berkeley Segmentation Dataset of size $482 \times 321$. We make random pairs of above images and use them as a source signal, combined by the mixing matrix $A = \begin{bmatrix} 1 & 1  \\ 1 & -1  \end{bmatrix} $. From practical point of view, we simply obtain two new images by adding and dividing sources pictures. 
Our goal is to reconstruct original images by using only the knowledge about mixed ones. 
As a summary from the experiment, in Fig.~\ref{fig:rank_img} we present a boxplots of ranks obtained by the methods.

\begin{figure*}[!t]
\normalsize
\begin{center}
\subfigure[Boxplots of ranks of ICA methods obtained by using the Tucker's congruence coefficient measure in the separation of images.] {\label{fig:rank_img_1}
	\includegraphics[width=2.1in]{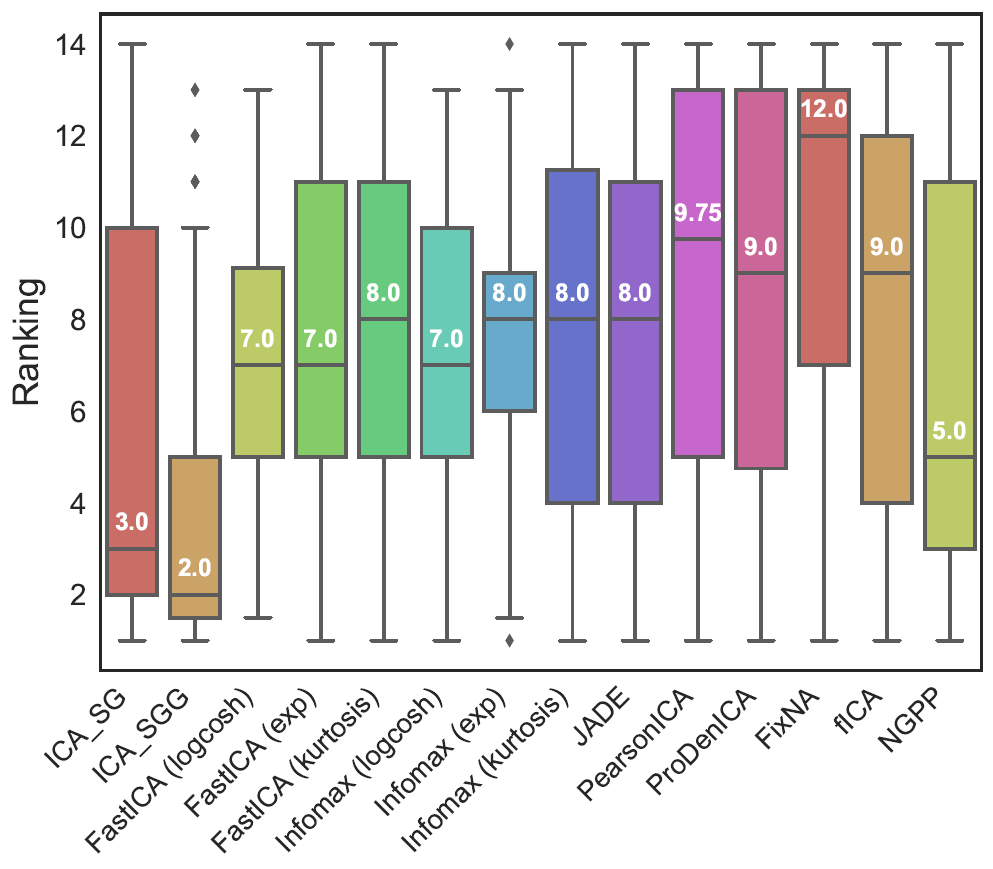} 
}	
\quad 
\subfigure[Boxplots of ranks of ICA methods obtained by using Amari-Cichocki-Yang measures in the separation of images.] {\label{fig:rank_img_2}
	\includegraphics[width=2.1in]{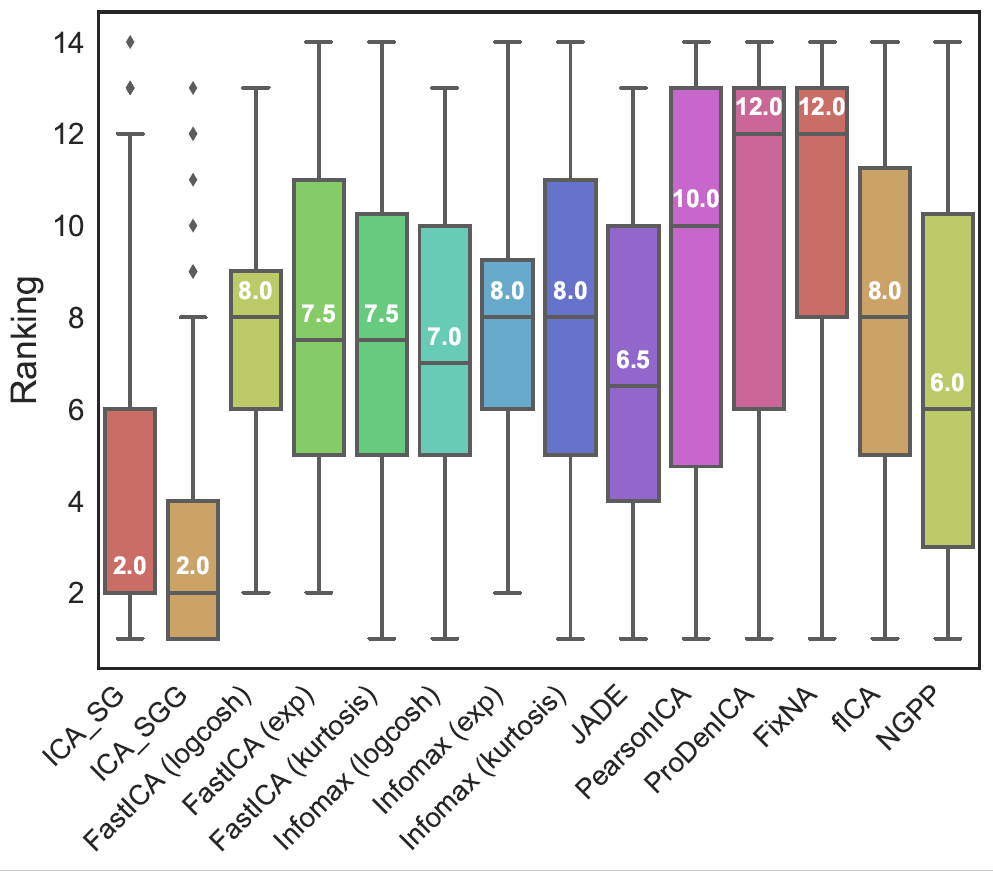} 
}
\end{center}
\caption{Boxplots of ranks of ICA methods obtained by using the Tucker's congruence coefficient and Amari-Cichocki-Yang measures in the separation of images.}
\label{fig:rank_img}
\end{figure*}

In the case of the Tucker's congruence coefficient measure and Amari-Cichocki-Yang error almost in all situation we obtain better results. The \ICA{} method essentially better recovers original signals. In Fig.~\ref{fig:image_ICA_1} we can see that \ICA{} almost perfectly recovers source signal.

\subsection{Cocktail-party problem}

In this subsection we compare our method with classical ones in the case of cocktail-party problem. 
Imagine that you are in a room where two people are speaking simultaneously. You have two microphones, which you hold in different locations. The microphones give you two recorded time signals, which we could interpret as mixed signal $\x$. Each of these recorded signals is a weighted sum of the speech signals emitted by the two speakers, which we denote by $\s$. 
The cocktail-party problem is to estimate the two original speech signals. 

In our experiments we use signal obtained by mixing synthetic sources\footnote{We use signals from \url{http://research.ics.aalto.fi/ica/cocktail/cocktail_en.cgi}.} (similar as before we use mixing matrix $A = \begin{bmatrix} 1 & 1  \\ 1 & -1  \end{bmatrix} $). 
As a summary from the experiment, in Fig.~\ref{fig:rank_sound} we present a boxplots of ranks obtained by the methods.
In the case of cocktail-party problem our method recovers  
sources signal better or comparable to the classical methods. 

\begin{figure*}[!t]
\normalsize
\begin{center}
\subfigure[Boxplots of ranks of ICA methods obtained by using the Tucker's congruence coefficient measure in the case of Cocktail-party problem.] {\label{fig:rank_sound_1}
	\includegraphics[width=2.1in]{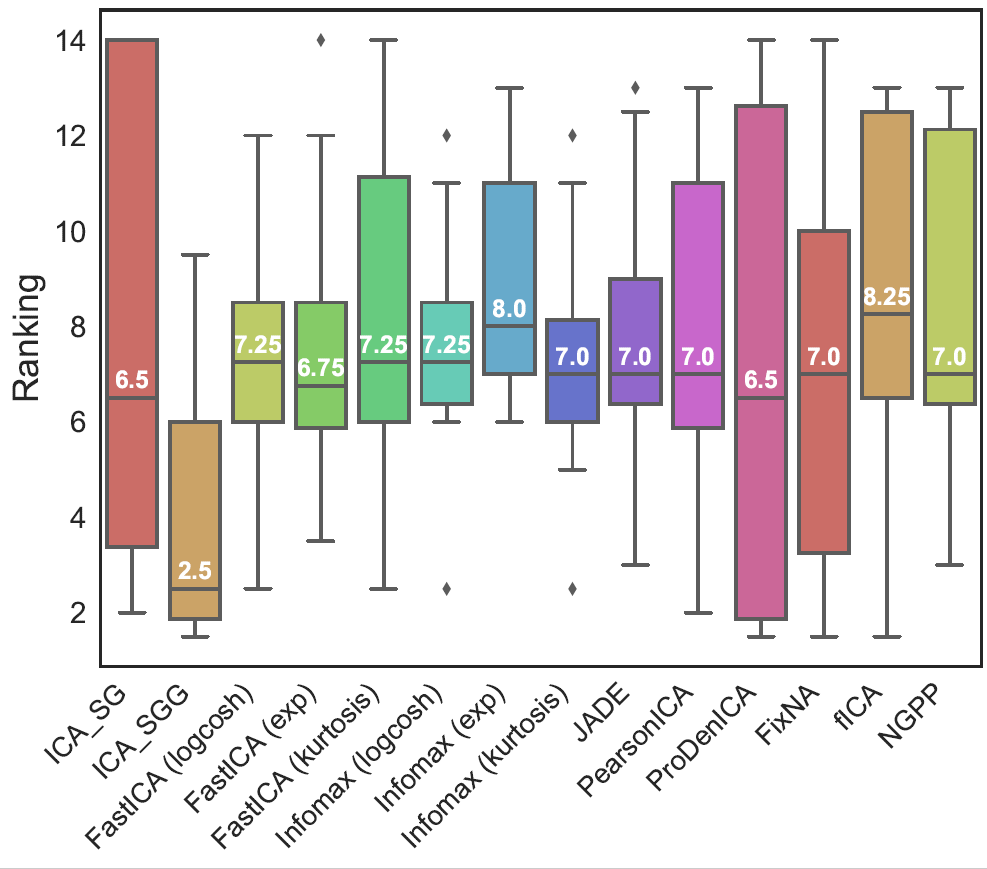} 
}	
\quad 
\subfigure[Boxplots of ranks of ICA methods obtained by using Amari-Cichocki-Yang measures in the case of Cocktail-party problem.] {\label{fig:rank_sound_2}
	\includegraphics[width=2.1in]{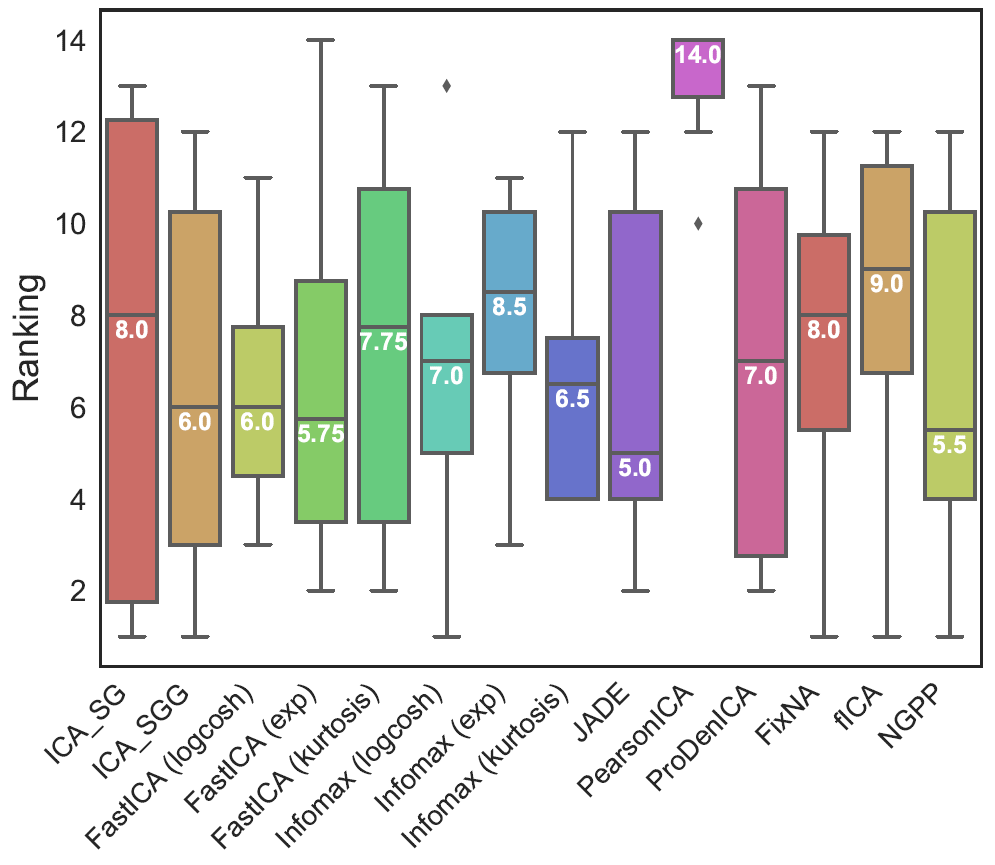} 
}
\end{center}
\caption{Boxplots of ranks of ICA methods obtained by using the Tucker's congruence coefficient and Amari-Cichocki-Yang measures in the case of Cocktail-party problem.}
\label{fig:rank_sound}
\end{figure*}

%
%

\subsection{EEG}

At the end of this section we present how our method works in the case of EEG signals. In this context, ICA is applied to many different task like
eye movements, blinks, muscle, heart and line noise e.t.c.. 
In this experiment we focus on eye movement and blink artifacts. 
Our goal here is to demonstrate that our method is capable of
finding artifacts in real EEG data. However,
we emphasize that it does not provide a complete solution
to any of these practical problems. Such a solution usually
entails a significant amount of domain-specific knowledge
and engineering. Nevertheless, from these preliminary
results with EEG data, we believe that
the method presented in this paper provides a reasonable
solution for signal separation, which is simple and
effective enough to be easily customized for a broad range
of practical problems.

For the EEG analysis, the rows of the input matrix $\x$ are the EEG signals recorded at
different electrodes, the rows of the output data matrix $\s = W\x$ are time courses of
activation of the ICA components, and the columns of the inverse matrix $W$ give
the projection strengths of the respective components onto scalp sensors. 

One EEG data set used in the analysis was collected from 40 scalp electrodes (see Fig. \ref{fig:EEG_1}). The second and the third ones are located very near to the eye and can be understood as a base (we can use them for removing eye blinking artifacts). In Fig. \ref{fig:EEG_2} we present signals obtained by \ICA.  The scale of this figure is large but we can find the data which have spikes exactly in the same place as the two base signals (see Fig. \ref{fig:EEG_3}). After removing selected signal and going back to the original situation we obtain signal (see Fig. \ref{fig:EEG_4}) without eye blinking artifacts (compare Fig. \ref{fig:EEG_1} with  Fig. \ref{fig:EEG_4}).

\begin{figure*}[!t]
\normalsize
\begin{center}
\subfigure[Original signal from EEG.] {\label{fig:EEG_1}
\includegraphics[width=2.1in]{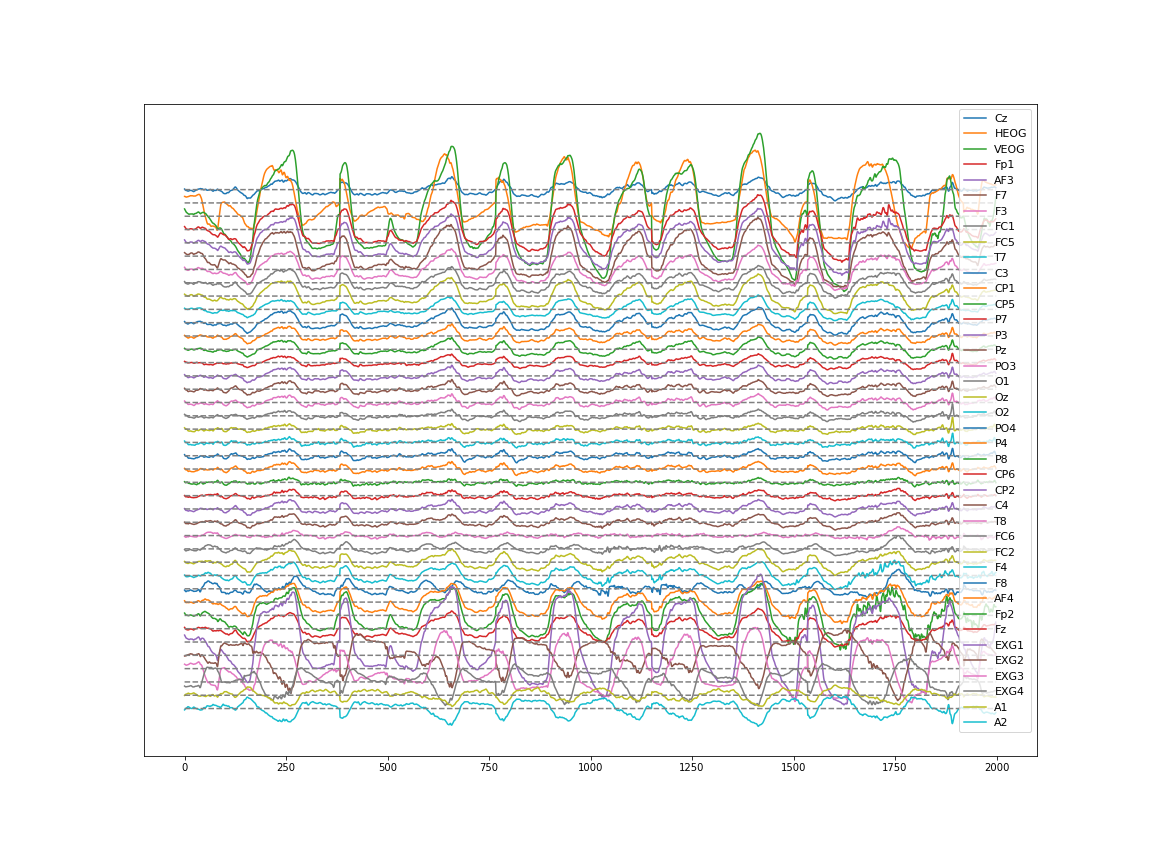} 
}
\subfigure[Sources signals obtained by \ICA.] {\label{fig:EEG_2}
\includegraphics[width=2.1in]{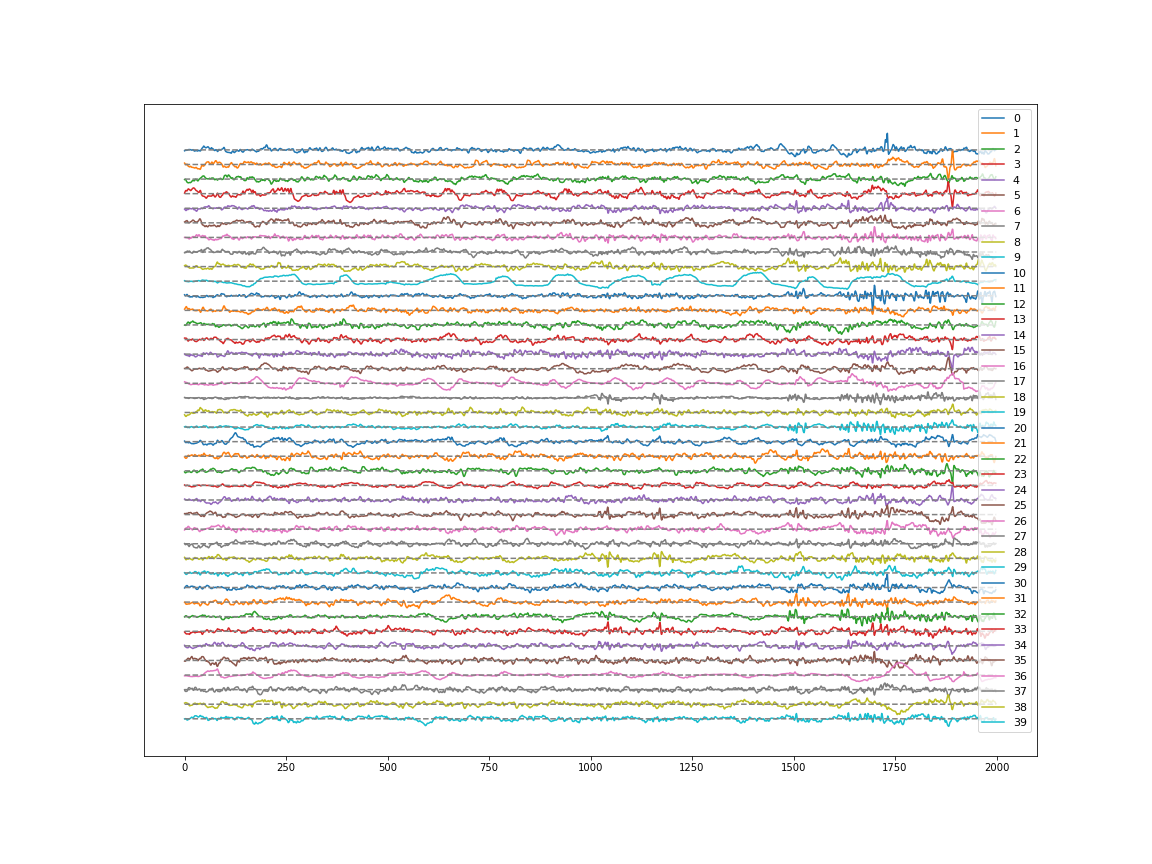}
}
\subfigure[Three components  9, 20, 36.] {\label{fig:EEG_3}
\includegraphics[width=2.1in]{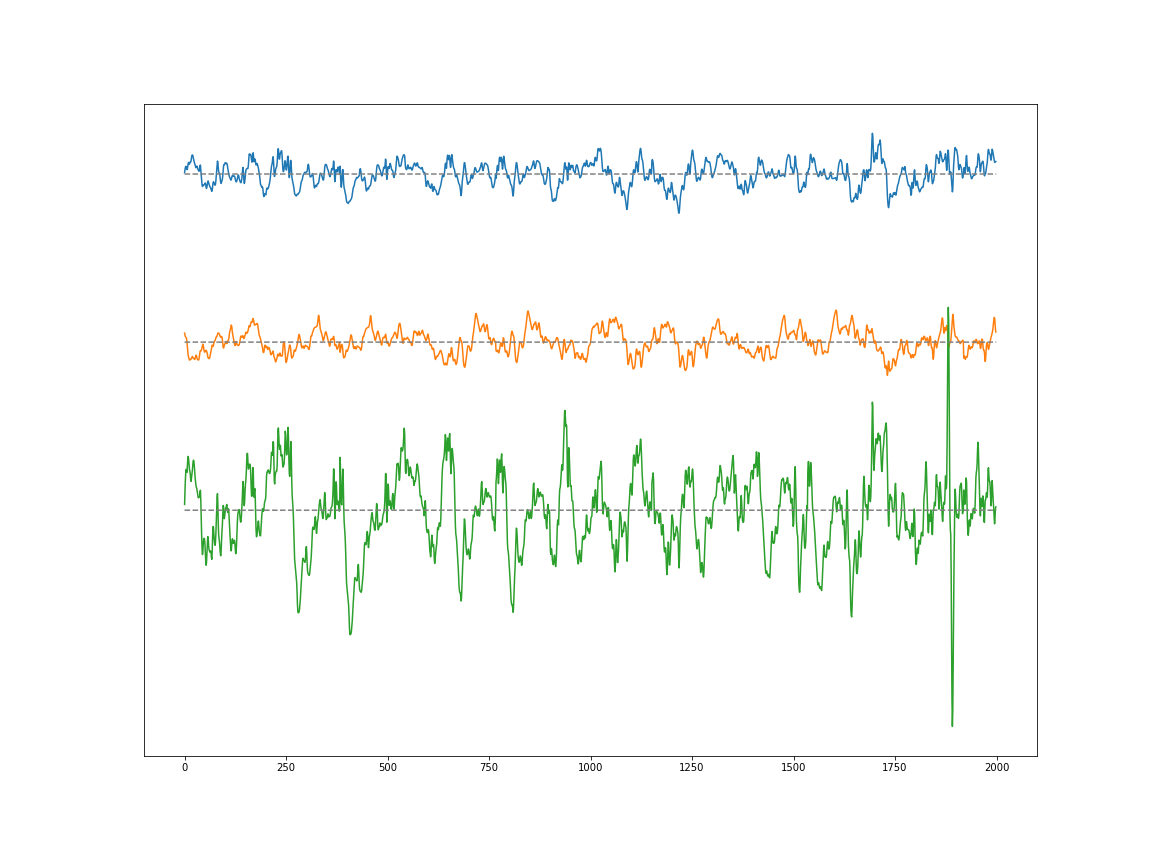} 
}
\subfigure[Original EEG signal with removed three components 9, 20, 36.] {\label{fig:EEG_4}
\includegraphics[width=2.1in]{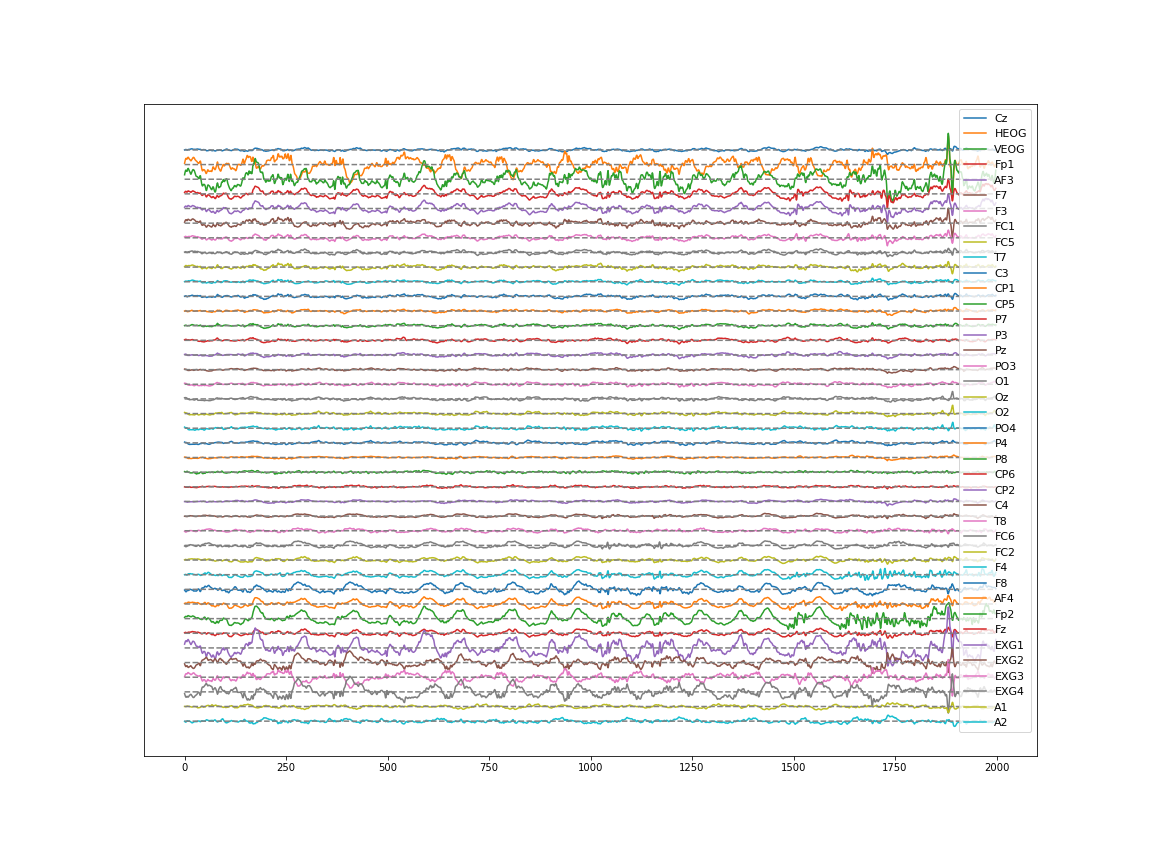}
}
\end{center}
\caption{Results of \ICA in the case of EEG data.}
\label{fig:EEG}
\end{figure*}


\section{Conclusion}

In our work we introduce a new approach to ICA in which we approximate the data density by product of 
Split Generalized Gaussian distribution, which allows us to simultaneously model skewness and heavy-tails in data.
Consequently, we obtain ICA method which gives essentially better results than classical approaches with slightly worst computational complexity.

We verify our approach on images, sound and EEG data.
In the case of source signal reconstructing our approach better recover original signals.
The main reason for this behavior is that real data sets are usually skewed with heavy tails. 
\section{Appendix A}\label{App:A}

\begin{proof}[Proof of Theorem \ref{the:min}.]
Let $X=\{ \x_1, \ldots, \x_n \}$.
We write 
$$
\z_i= W(\x_i-\m), \quad \z_{ij}= \w_j^T(\x_i-\m),
$$
for observation $i$, where $i=1,\ldots,n$ and coordinates $j=1,\ldots,d$.

Let us consider the likelihood function, i.e. 
$$
\begin{array}{l}
L(X;\m,W,\sigma_l,\sigma_r,c) = \prod\limits_{i=1}^{n} SGG_d(\x_i ; \m,W,\sigma_l,\sigma_r,c) \\[6pt] 
=\prod\limits_{i=1}^{n} | \det(W)|  \prod\limits_{j=1}^{d} SGG(  \w_j^T (\x_i - \m) ; 0 , \sigma_l, \sigma_r, c)\\[6pt]
=\Big( c_1|\det(W)| \Big)^{n} \Big( \prod\limits_{j=1}^{d} (\sigma_{lj} + \sigma_{rj}) \Big)^{-n} 
\prod\limits_{i=1}^{n} \prod\limits_{j=1}^{d} \exp \Big[ - \beta^{\frac{c}{2}} \left( \frac{\vert z_{ij} \vert}{\sigma_{lj}} \1_{ \{ z_{ij} \leq 0 \} } + \frac{\vert z_{ij} \vert}{\sigma_{rj}} \1_{ \{ z_{ij} > 0 \} } \right)^c \Big],
\end{array}
$$
where 
$
c_1=\left( \frac{c}{\Gamma(\frac{1}{c})} \sqrt{\beta} \right)^{d}
$
and
$
\beta = \tfrac{\Gamma(\frac{3}{c})}{\Gamma(\frac{1}{c})}.
$
Now we take the log-likelihood function, i.e.
$$
\begin{array}{l}
\ln(L(X;\m,W,\sigma_l,\sigma_r,c)) \\[6pt]
=\ln \bigg( \Big( c_1|\det(W)| \Big)^{n} \Big( \prod\limits_{j=1}^{d} (\sigma_{lj}+\sigma_{rj}) \Big)^{-n} \bigg) + 
 \sum\limits_{i=1}^{n} \sum\limits_{j=1}^{d} \Big[ - \beta^{\frac{c}{2}} \left( \frac{\vert z_{ij} \vert}{\sigma_{lj}} \1_{ \{ z_{ij} \leq 0 \} } + \frac{\vert z_{ij} \vert}{\sigma_{rj}} \1_{ \{ z_{ij} > 0 \} } \right)^c \Big]  \\[6pt]
= \ln \bigg( \Big( c_1|\det(W)| \Big)^{n} \Big( \prod\limits_{j=1}^{d} (\sigma_{lj} + \sigma_{rj}) \Big)^{-n} \bigg)  -
 \beta^{\frac{c}{2}} \sum\limits_{j=1}^{d} \Big( \sigma_{lj}^{-c} \sum\limits_{i \in I_{j}}   \vert z_{ij} \vert^c   + \sigma_{rj}^{-c}  \sum\limits_{i \in I_{j}^{'}}  \vert z_{ij} \vert^c  \Big) \\[6pt]
= \ln \bigg( \Big( c_1|\det(W)| \Big)^{n} \Big( \prod\limits_{j=1}^{d} (\sigma_{lj} + \sigma_{rj}) \Big)^{-n} \bigg)  - 
 \beta^{\frac{c}{2}} \sum\limits_{j=1}^{d} \Big( \sigma_{lj}^{-c} s_{1j}  + \sigma_{rj}^{-c} s_{2j}  \Big).
\end{array}
$$

We fix  $\m$, $W$, $c$ and maximize the log-likelihood function over $\sigma_l$ and $\sigma_r$.
In such a case we have to solve the following system of equations
$$
\begin{array}{l}
\frac{\partial \ln(L(X;\m,W,\sigma_l,\sigma_r,c)) }{\partial \alpha_{lj}} = -\frac{n}{\sigma_{lj} + \sigma_{rj}} + c \beta^{\frac{c}{2}} \sigma_{lj}^{-c-1} s_{1j}
 = 0, \\[6pt] 
 \frac{\partial \ln(L(X;\m,W,\sigma_l,\sigma_r,c)) }{\partial \sigma_{rj}} = -\frac{n}{\sigma_{lj} + \sigma_{rj}} + c \beta^{\frac{c}{2}} \sigma_{rj}^{-c-1} s_{2j}
= 0 , 
\end{array}
$$
for  $ j=1,\ldots,d$.
By simple calculations and substituting $\sigma_{rj} = \sigma_{lj} \left( \frac{s_{2j}}{s_{1j}} \right)^{\frac{1}{c+1}} = \sigma_{lj} \tau$ we obtain the expressions for the estimators
\begin{align*}
\hat{\sigma}_{lj}^c(\m,W) = 
\tfrac{c}{n} \beta^{\frac{c}{2}} s_{1j}^{\frac{c}{c+1}} g_{j}(\m,W), \qquad \hat{\tau}_{j}(\m,W) = \bigg( \frac{s_{2j}}{s_{1j}} \bigg)^{\frac{1}{c+1}}
\end{align*}
and

$$
\hat{\sigma}_{rj}^c(\m,W) = \hat{\sigma}_{lj}^c(\m,W) \cdot \hat{\tau}^c_{j}(\m,W) =
\tfrac{c}{n} \beta^{\frac{c}{2}}  s_{2j}^{\frac{c}{c+1}} g_{j}(\m,W).
$$

Substituting it into the log-likelihood function,
we get
$$
\begin{array}{l}
\hat{L}(\m,W) = \Big( c_1|\det(W)| \Big)^{n} \Big( \prod\limits_{j=1}^{d} (\frac{c}{n})^{\frac{1}{c}} \sqrt{\beta} g_j(\m,W)^{\frac{c+1}{c}} \Big)^{-n} \cdot e^{-\frac{nd}{c}}
\\[6pt]
= \bigg( \frac{nc^{c-1}}{e \Gamma(\frac{1}{c})^c} \bigg)^{\frac{dn}{c}}  \Big( \frac{1}{|\det(W)|^{\frac{c}{c+1}}} \prod\limits_{j=1}^{d} g_j(\m,W) \Big)^{-\frac{n(c+1)}{c}}\\[6pt]
= \bigg( \frac{\kappa n}{c e} \bigg)^{\frac{dn}{c}} \Big( |\det(W)|^{-\frac{c}{c+1}} \prod\limits_{j=1}^{d} g_j(\m,W) \Big)^{-\frac{n(c+1)}{c}}
\end{array}
$$
where $\kappa = \left( \frac{c}{\Gamma(\frac{1}{c})} \right)^c$.
\end{proof}

\section{Appendix B}\label{App:B}

Before we prove Theorem \ref{ther:grad}, we recall the following lemma. 

\begin{lemma}\label{jacobi}
Let $A = (a_{ij})_{1 \leq i,j \leq d}$ be a differentiable map from real numbers to $d \times d$ matrices then
\begin{equation}
\frac{\partial \det(A)}{\partial a_{ij}} = \mathrm{adj}^T(A)_{ij},
\end{equation}
where $\mathrm{adj}(A)$ stands for the adjugate of $A$, i.e. the transpose of the cofactor matrix.
\end{lemma}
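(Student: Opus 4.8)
The plan is to prove this classical Jacobi formula by appealing to the Laplace (cofactor) expansion of the determinant. First I would fix a row index $i$ and expand the determinant along that row, writing $\det(A) = \sum_{k=1}^d a_{ik} C_{ik}$, where $C_{ik} = (-1)^{i+k} M_{ik}$ denotes the $(i,k)$ cofactor and $M_{ik}$ the corresponding minor, i.e. the determinant of the $(d-1) \times (d-1)$ submatrix obtained by deleting row $i$ and column $k$ from $A$.

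The crucial observation — and really the only idea in the proof — is that each cofactor $C_{ik}$ is independent of every entry lying in the $i$-th row of $A$, since that whole row is struck out when forming the minor $M_{ik}$. Consequently, differentiating the expansion with respect to $a_{ij}$ kills every term except the one with $k = j$, and within that surviving term only the explicit factor $a_{ij}$ carries a dependence on $a_{ij}$. This gives $\frac{\partial \det(A)}{\partial a_{ij}} = C_{ij}$ immediately, without any further computation.

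It then remains only to reconcile this with the adjugate notation. By definition the adjugate is the transpose of the cofactor matrix, so $\mathrm{adj}(A)_{ij} = C_{ji}$, whence $\mathrm{adj}^T(A)_{ij} = \mathrm{adj}(A)_{ji} = C_{ij}$, matching the expression derived above. I expect the only point requiring care to be precisely this index bookkeeping: the two successive index swaps, one coming from the definition of the adjugate and one from the transpose appearing in the statement, must cancel so that the bare cofactor $C_{ij}$ reappears on the right-hand side. There is no genuine analytic difficulty — the differentiability hypothesis merely ensures the partial derivatives exist, and the argument itself is entirely algebraic.
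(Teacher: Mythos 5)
Your proposal is correct and follows essentially the same route as the paper: a Laplace cofactor expansion along row $i$, the observation that the cofactors $C_{ik}$ do not involve entries of row $i$, and the index bookkeeping $\mathrm{adj}^T(A)_{ij} = C_{ij}$. If anything, your version is slightly more careful than the paper's, which reuses $j$ both as the summation index and as the fixed column index in the expansion.
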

\begin{proof}
By the Laplace expansion $\det A = \sum\limits_{j=1}^{d} (-1)^{i+j} a_{ij} M_{ij}$ where $M_{ij}$ is the minor of the entry in the $i$-th row and $j$-th column. Hence
$$\frac{\partial \det A}{\partial a_{ij}} = (-1)^{i+j} M_{ij} = \mathrm{adj}^T(A)_{ij}.$$
\end{proof}

Now we are ready to calculate the gradient of the function $l$.

\begin{proof}[Proof of Theorem \ref{ther:grad}.]
Let us start with the partial derivative of $\ln l$ with respect to $\m$. We have
$$
\begin{array}{l}
\frac{\partial \ln {l}(X;\m,W,c)}{\partial \m_k} =
\sum \limits_{j=1}^d \frac{\partial \ln ({g}_j(\m,W))}{\partial \m_k} = \sum\limits_{j=1}^d \frac{1}{{s}_{1j}^{\frac{1}{c+1}} + {s}_{2j}^{\frac{1}{c+1}}} \frac{\partial \left( {s}_{1j}^{\frac{1}{c+1}} + {s}_{2j}^{\frac{1}{c+1}} \right)}{\partial \m_k} =\\
 \sum \limits_{j=1}^d \frac{1}{{s}_{1j}^{\frac{1}{c+1}} + {s}_{2j}^{\frac{1}{c+1}}} \bigg(
\frac{1}{(c+1) {s}_{1j}^{\frac{c}{c+1}}} \frac{\partial {s}_{1j}}{\partial \m_k} +
\frac{1}{(c+1) {s}_{2j}^{\frac{c}{c+1}}} \frac{\partial {s}_{2j}}{\partial \m_k}
\bigg).
\end{array}
$$
Now, we need $\frac{\partial {s}_{1j}}{\partial \m_k}$ and $\frac{\partial {s}_{2j}}{\partial \m_k}$, therefore
$$
\begin{array}{l}
\frac{\partial {s}_{1j}}{\partial \m_k} = 
\sum\limits_{i \in {I}_j} \frac{\partial \vert \omega^T_j (\x_i - \m) \vert^c}{\partial \m_k} = \sum\limits_{i \in {I}_j} -c \vert \omega^T_j (\x_i - \m) \vert^{c-1} \frac{\partial ( \omega^T_j (\x_i - \m) )}{\partial \m_k} = \\
 \sum\limits_{i \in {I}_j} c \vert \omega^T_j (\x_i - \m) \vert^{c-1} \omega_{jk} = \sum\limits_{i \in {I}_j} -c (-1)^{c-1} \left( \omega^T_j (\x_i - \m) \right)^{c-1} \omega_{jk}.
\end{array}
$$
Analogously we get
$$
\begin{array}{l}
\frac{\partial {s}_{2j}}{\partial \m_k} = \sum\limits_{i \in {I}_j^{'}} -c \vert \omega^T_j (\x_i - \m) \vert^{c-1} \omega_{jk} = \sum\limits_{i \in {I}_j^{'}} -c \left( \omega^T_j (\x_i - \m) \right)^{c-1} \omega_{jk}.
\end{array}
$$
Hence 
$$
\begin{array}{l}
\frac{\partial \ln {l}}{\partial \m_k} =\sum\limits_{j=1}^d \frac{1}{{s}_{1j}^{\frac{1}{c+1}} + {s}_{2j}^{\frac{1}{c+1}}} \bigg(
\frac{1}{(c+1) {s}_{1j}^{\frac{c}{c+1}}} \sum\limits_{i \in I_j} c \vert \omega^T_j (\x_i - \m) \vert^{c-1} \omega_{jk} - \\[6pt]
\frac{1}{(c+1) {s}_{2j}^{\frac{c}{c+1}}} \sum\limits_{i \in I_j^{'}} c \vert \omega^T_j (\x_i - \m) \vert^{c-1} \omega_{jk}
\bigg).
\end{array}
$$

Now we calculate the partial derivative of $\ln {l}(X;\m,W,c)$ with respect to the matrix $W$. We have
$$
\begin{array}{l}
\frac{\partial \ln {l}(X;\m,W,c)}{\partial \w_{pk}} = \frac{\partial \ln |\det(W)|^{-\frac{c}{c+1}}}{\partial \w_{pk}} + \sum\limits_{j=1}^d \frac{\partial \ln ({g}_j(\m,W))}{\partial \w_{pk}}.
\end{array}
$$
To calculate the derivative of the determinant we use Jacobi's formula (see Lemma \ref{jacobi}).
Hence
$$
\begin{array}{l}
\frac{\partial \ln (\det(W)^{-\frac{c}{c+1}})}{\partial \w_{pk}} = \det(W)^{\frac{c}{c+1}}  \Big(-\frac{c}{c+1}\Big)  \det(W)^{-\frac{2c+1}{c+1}}  \frac{\partial \det(W)}{\partial \w_{pk}} = -\frac{c}{c+1} \det(W)^{-1}  \mathrm{adj}^T(W)_{pk} \\[6pt]
 = -\frac{c}{c+1} \frac{1}{\det(W)}  \left[\det(W)  (W^{-1})^T_{pk}\right]= -\frac{c}{c+1}  (\w^{-1})^T_{pk},
\end{array}
$$
where $(\w^{-1})^T_{pk}$ is the element in the $p$-th row and $k$-th column of the matrix $(W^{-1})^T$. Now we calculate 
$$
\begin{array}{l}
\frac{\partial \ln ({g}_j(\m,W))}{\partial \w_{pk}} = \frac{1}{{s}_{1j}^{\frac{1}{c+1}} + {s}_{2j}^{\frac{1}{c+1}}} \frac{\partial \left({s}_{1j}^{\frac{1}{c+1}} + {s}_{2j}^{\frac{1}{c+1}} \right)}{\partial \w_{pk}}= \frac{1}{{s}_{1j}^{\frac{1}{c+1}} + {s}_{2j}^{\frac{1}{c+1}}} \bigg(
\frac{1}{(c+1) {s}_{1j}^{\frac{c}{c+1}}}  \frac{\partial {s}_{1j}}{\partial \w_{pk}} +
\frac{1}{(c+1) {s}_{2j}^{\frac{c}{c+1}}}  \frac{\partial {s}_{2j}}{\partial \w_{pk}}
\bigg),
\end{array}
$$
where
$$
\begin{array}{l}
\frac{\partial {s}_{1j}}{\partial \w_{pk}} = \sum\limits_{ i \in {I}_j} \frac{\partial \vert \w^T_j (\x_i - \m) \vert^c}{\partial \w_{pk}} = \sum\limits_{ i \in {I}_j} -c \vert \w^T_j (\x_i - \m) \vert^{c-1} \frac{\partial \w^T_j (\x_i - \m)}{\partial \w_{pk}}=
\\[6pt]
\left\{ \begin{array}{ll}
0, & \text{if} \; j\neq p\\
\sum\limits_{ i \in {I}_p} -c (-1)^{c-1} \left( \w^T_p (\x_i - \m) \right)^{c-1} (\x_{ik} - \m_k), & \text{if} \; j=p\\
\end{array} \right.
\end{array}
$$
and $\x_{ik}$ is the $k$-th element of the vector $\x_i$. Analogously we get
$$\frac{\partial {s}_{2j}}{\partial \w_{pk}} = \left\{ \begin{array}{ll}
0, & \text{if} \; j\neq p\\
\sum\limits_{ i \in {I}_p^{'}} c \vert \w^T_p (\x_i - \m) \vert^{c-1} (\x_{ik} - \m_k), & \text{if} \; j=p.
\end{array} \right.
$$
Hence we obtain 
$$
\begin{array}{l}
\frac{\partial \ln {l}}{\partial \w_{pk}} = -\frac{c}{c+1}  (\w^{-1})^T_{pk} + \frac{1}{{s}_{1p}^{\frac{1}{c+1}} +{s}_{2p}^{\frac{1}{c+1}}} 
 \bigg(
\frac{-1}{c+1} {s}_{1p}^{-\frac{c}{c+1}} \sum\limits_{ i \in {I}_p} c \vert \w^T_p (\x_i - \m) \vert^{c-1} (\x_{ik} - \m_k)\\[6pt]
+ \frac{1}{c+1} {s}_{2p}^{-\frac{c}{c+1}} \sum\limits_{ i \in {I}_p^{'}} c \vert \w^T_p (\x_i - \m) \vert^{c-1} (\x_{ik} - \m_k) \bigg).
\end{array}
$$
Now we calculate the derivative with respect to $c$.
$$
\begin{array}{l}
\frac{\partial \ln {l}(X;\m,W)}{\partial c} =
-\frac{\partial}{\partial c}\big(\frac{c}{c+1} \ln |\det(W)| \big) + \sum \limits_{j=1}^d \frac{\partial \ln ({g}_j(\m,W))}{\partial c} =\\
-\frac{1}{(c+1)^2}  \ln |\det(W)| + \sum \limits_{j=1}^d \frac{1}{{s}_{1j}^{\frac{1}{c+1}} + {s}_{2j}^{\frac{1}{c+1}}} \frac{\partial}{\partial c} \big( {s}_{1j}^{\frac{1}{c+1}} + {s}_{2j}^{\frac{1}{c+1}} \big) =\\
-\frac{1}{(c+1)^2}  \ln |\det(W)| + \sum \limits_{j=1}^d \frac{1}{{s}_{1j}^{\frac{1}{c+1}} + {s}_{2j}^{\frac{1}{c+1}}} \bigg( {s}_{1j}^{\frac{1}{c+1}} \frac{\partial}{\partial c} (\frac{1}{c+1} \ln {s}_{1j}) + {s}_{2j}^{\frac{1}{c+1}} \frac{\partial}{\partial c} (\frac{1}{c+1} \ln {s}_{2j})  \bigg)\\
= -\frac{1}{(c+1)^2}  \ln |\det(W)| + \\
\sum \limits_{j=1}^d \frac{1}{{s}_{1j}^{\frac{1}{c+1}} + {s}_{2j}^{\frac{1}{c+1}}} \bigg( -\frac{{s}_{1j}^{\frac{1}{c+1}}}{(c+1)^2} \ln {s}_{1j} + \frac{1}{c+1}s_{1j}^{-\frac{c}{c+1}} \frac{\partial s_{1j}}{\partial c} -\frac{{s}_{2j}^{\frac{1}{c+1}}}{(c+1)^2} \ln {s}_{2j} + \frac{1}{c+1}s_{2j}^{-\frac{c}{c+1}} \frac{\partial s_{2j}}{\partial c} \bigg).
\end{array}
$$
where
$$
\begin{array}{c}
\frac{\partial {s}_{1j} }{\partial c} = \! \sum\limits_{i \in I_j} \vert \w_{j}^T (\x_i-\m) \vert^c \ln \vert \w_{j}^T (\x_i-\m) \vert,
\\[1ex]
\frac{\partial {s}_{2j} }{\partial c} = \! \sum\limits_{i \in I_j^{'}} \vert \w_{j}^T (\x_i-\m) \vert^c \ln \vert \w_{j}^T (\x_i-\m) \vert.
\end{array}
$$
\end{proof}


\section*{References}

\begin{thebibliography}{10}
\expandafter\ifx\csname url\endcsname\relax
  \def\url#1{\texttt{#1}}\fi
\expandafter\ifx\csname urlprefix\endcsname\relax\def\urlprefix{URL }\fi
\expandafter\ifx\csname href\endcsname\relax
  \def\href#1#2{#2} \def\path#1{#1}\fi

\bibitem{beckmann2004probabilistic}
C.~F. Beckmann, S.~M. Smith, Probabilistic independent component analysis for
  functional magnetic resonance imaging, Medical Imaging, IEEE Transactions on
  23~(2) (2004) 137--152.

\bibitem{beckmann2005tensorial}
C.~F. Beckmann, S.~M. Smith, Tensorial extensions of independent component
  analysis for multisubject fmri analysis, Neuroimage 25~(1) (2005) 294--311.

\bibitem{rodriguez2012noising}
P.~A. Rodriguez, V.~D. Calhoun, T.~Adal{\i}, De-noising, phase ambiguity
  correction and visualization techniques for complex-valued ica of group fmri
  data, Pattern recognition 45~(6) (2012) 2050--2063.

\bibitem{brunner2007spatial}
C.~Brunner, M.~Naeem, R.~Leeb, B.~Graimann, G.~Pfurtscheller, Spatial filtering
  and selection of optimized components in four class motor imagery eeg data
  using independent components analysis, Pattern Recognition Letters 28~(8)
  (2007) 957--964.

\bibitem{delorme2007enhanced}
A.~Delorme, T.~Sejnowski, S.~Makeig, Enhanced detection of artifacts in eeg
  data using higher-order statistics and independent component analysis,
  Neuroimage 34~(4) (2007) 1443--1449.

\bibitem{zhang2013bayesian}
H.~Zhang, H.~Yang, C.~Guan, Bayesian learning for spatial filtering in an
  eeg-based brain--computer interface, IEEE transactions on neural networks and
  learning systems 24~(7) (2013) 1049--1060.

\bibitem{choi2005fault}
S.~W. Choi, E.~B. Martin, A.~J. Morris, I.-B. Lee, Fault detection based on a
  maximum-likelihood principal component analysis (pca) mixture, Industrial \&
  engineering chemistry research 44~(7) (2005) 2316--2327.

\bibitem{kiviluoto1998independent}
K.~Kiviluoto, E.~Oja, Independent component analysis for parallel financial
  time series., in: ICONIP, Vol.~2, 1998, pp. 895--898.

\bibitem{haghighi2008ica}
A.~M. Haghighi, I.~M. Haghighi, et~al., An ica approach to purify components of
  spatial components of seismic recordings, in: SPE Annual Technical Conference
  and Exhibition, Society of Petroleum Engineers, 2008.

\bibitem{spurek2017ica}
P.~Spurek, J.~Tabor, P.~Rola, M.~Ociepka, Ica based on asymmetry, Pattern
  Recognition 67 (2017) 230--244.

\bibitem{hyvarinen1999fast}
A.~Hyv{\"a}rinen, Fast and robust fixed-point algorithms for independent
  component analysis, Neural Networks, IEEE Transactions on 10~(3) (1999)
  626--634.

\bibitem{helwig2013critique}
N.~E. Helwig, S.~Hong, A critique of tensor probabilistic independent component
  analysis: implications and recommendations for multi-subject fmri data
  analysis, Journal of neuroscience methods 213~(2) (2013) 263--273.

\bibitem{song2016ecoica}
L.~Song, H.~Lu, Ecoica: Skewness-based ica via eigenvectors of cumulant
  operator, in: Asian Conference on Machine Learning, 2016, pp. 445--460.

\bibitem{karvanen2000pearson}
J.~Karvanen, J.~Eriksson, V.~Koivunen, Pearson system based method for blind
  separation, in: Proceedings of Second International Workshop on Independent
  Component Analysis and Blind Signal Separation (ICA2000), Helsinki, Finland,
  2000, pp. 585--590.

\bibitem{karvanen2002blind}
J.~Karvanen, V.~Koivunen, Blind separation methods based on pearson system and
  its extensions, Signal Processing 82~(4) (2002) 663--673.

\bibitem{pham1997blind}
D.~T. Pham, P.~Garat, Blind separation of mixture of independent sources
  through a quasi-maximum likelihood approach, Signal Processing, IEEE
  Transactions on 45~(7) (1997) 1712--1725.

\bibitem{blaschke2004cubica}
T.~Blaschke, L.~Wiskott, Cubica: Independent component analysis by simultaneous
  third-and fourth-order cumulant diagonalization, IEEE Transactions on Signal
  Processing 52~(5) (2004) 1250--1256.

\bibitem{virta2016projection}
J.~Virta, K.~Nordhausen, H.~Oja, Projection pursuit for non-gaussian
  independent components, arXiv preprint arXiv:1612.05445.

\bibitem{secchi2016hierarchical}
P.~Secchi, S.~Vantini, P.~Zanini, Hierarchical independent component analysis:
  a multi-resolution non-orthogonal data-driven basis, Computational Statistics
  \& Data Analysis 95 (2016) 133--149.

\bibitem{hyvarinen2004independent}
A.~Hyv{\"a}rinen, J.~Karhunen, E.~Oja, Independent component analysis, Vol.~46,
  John Wiley \& Sons, 2004.

\bibitem{lee1999independent}
T.-W. Lee, M.~Girolami, T.~J. Sejnowski, Independent component analysis using
  an extended infomax algorithm for mixed subgaussian and supergaussian
  sources, Neural computation 11~(2) (1999) 417--441.

\bibitem{cardoso1989source}
J.-F. Cardoso, Source separation using higher order moments, in: Acoustics,
  Speech, and Signal Processing, 1989. ICASSP-89., 1989 International
  Conference on, IEEE, 1989, pp. 2109--2112.

\bibitem{comon1994independent}
P.~Comon, Independent component analysis, a new concept?, Signal processing
  36~(3) (1994) 287--314.

\bibitem{yang2005kernel}
J.~Yang, X.~Gao, D.~Zhang, J.-y. Yang, Kernel ica: An alternative formulation
  and its application to face recognition, Pattern Recognition 38~(10) (2005)
  1784--1787.

\bibitem{dagher2006face}
I.~Dagher, R.~Nachar, Face recognition using ipca-ica algorithm, IEEE
  transactions on pattern analysis and machine intelligence 28~(6) (2006)
  996--1000.

\bibitem{chuang2006recognizing}
C.-F. Chuang, F.~Y. Shih, Recognizing facial action units using independent
  component analysis and support vector machine, Pattern recognition 39~(9)
  (2006) 1795--1798.

\bibitem{tsai2006independent}
D.-M. Tsai, P.-C. Lin, C.-J. Lu, An independent component analysis-based filter
  design for defect detection in low-contrast surface images, Pattern
  Recognition 39~(9) (2006) 1679--1694.

\bibitem{jenssen2003independent}
R.~Jenssen, T.~Eltoft, Independent component analysis for texture segmentation,
  Pattern Recognition 36~(10) (2003) 2301--2315.

\bibitem{bressan2003using}
M.~Bressan, D.~Guillamet, J.~Vitria, Using an ica representation of local color
  histograms for object recognition, Pattern Recognition 36~(3) (2003)
  691--701.

\bibitem{kim2005iterative}
K.~I. Kim, M.~O. Franz, B.~Scholkopf, Iterative kernel principal component
  analysis for image modeling, IEEE Transactions on Pattern Analysis and
  Machine Intelligence 27~(9) (2005) 1351--1366.

\bibitem{luo2003spectral}
B.~Luo, R.~C. Wilson, E.~R. Hancock, Spectral embedding of graphs, Pattern
  recognition 36~(10) (2003) 2213--2230.

\bibitem{luo2002independent}
B.~Luo, R.~C. Wilson, E.~R. Hancock, The independent and principal component of
  graph spectra, in: Pattern Recognition, 2002. Proceedings. 16th International
  Conference on, Vol.~2, IEEE, 2002, pp. 164--167.

\bibitem{lai2014multilinear}
Z.~Lai, Y.~Xu, Q.~Chen, J.~Yang, D.~Zhang, Multilinear sparse principal
  component analysis, IEEE transactions on neural networks and learning systems
  25~(10) (2014) 1942--1950.

\bibitem{jutten1991blind}
C.~Jutten, J.~Herault, Blind separation of sources, part i: An adaptive
  algorithm based on neuromimetic architecture, Signal processing 24~(1) (1991)
  1--10.

\bibitem{giannakis1989cumulant}
G.~B. Giannakis, Y.~Inouye, J.~M. Mendel, Cumulant based identification of
  multichannel moving-average models, Automatic Control, IEEE Transactions on
  34~(7) (1989) 783--787.

\bibitem{lacoume1992separation}
J.-L. Lacoume, P.~Ruiz, Separation of independent sources from correlated
  inputs, Signal Processing, IEEE Transactions on 40~(12) (1992) 3074--3078.

\bibitem{cardoso1991super}
J.-F. Cardoso, Super-symmetric decomposition of the fourth-order cumulant
  tensor. blind identification of more sources than sensors, in: Acoustics,
  Speech, and Signal Processing, 1991. ICASSP-91., 1991 International
  Conference on, IEEE, 1991, pp. 3109--3112.

\bibitem{cardoso1999high}
J.-F. Cardoso, High-order contrasts for independent component analysis, Neural
  computation 11~(1) (1999) 157--192.

\bibitem{sharma2006subspace}
A.~Sharma, K.~K. Paliwal, Subspace independent component analysis using vector
  kurtosis, Pattern Recognition 39~(11) (2006) 2227--2232.

\bibitem{gaeta1990source}
M.~Gaeta, J.-L. Lacoume, et~al., Source separation without a priori knowledge:
  the maximum likelihood solution, in: Proc. EUSIPCO, Vol.~90, Barcelona,
  Spain, 1990, pp. 621--624.

\bibitem{bell1995information}
A.~J. Bell, T.~J. Sejnowski, An information-maximization approach to blind
  separation and blind deconvolution, Neural computation 7~(6) (1995)
  1129--1159.

\bibitem{harroy1996maximum}
F.~Harroy, J.-L. Lacoume, Maximum likelihood estimators and cramer-rao bounds
  in source separation, Signal processing 55~(2) (1996) 167--177.

\bibitem{comon2010handbook}
P.~Comon, C.~Jutten, Handbook of Blind Source Separation: Independent component
  analysis and applications, Academic press, 2010.

\bibitem{samworth2012independent}
R.~J. Samworth, M.~Yuan, et~al., Independent component analysis via
  nonparametric maximum likelihood estimation, The Annals of Statistics 40~(6)
  (2012) 2973--3002.

\bibitem{zarzoso2006optimal}
V.~Zarzoso, J.~J. Murillo-Fuentes, R.~Boloix-Tortosa, A.~K. Nandi, Optimal
  pairwise fourth-order independent component analysis, Signal Processing, IEEE
  Transactions on 54~(8) (2006) 3049--3063.

\bibitem{murillo2004sinusoidal}
J.~J. Murillo-Fuentes, F.~J. Gonz{\'a}lez-Serrano, A sinusoidal contrast
  function for the blind separation of statistically independent sources,
  Signal Processing, IEEE Transactions on 52~(12) (2004) 3459--3463.

\bibitem{cardoso2006maximum}
J.-F. Cardoso, T.~Adali, The maximum likelihood approach to complex ica., in:
  ICASSP (5), Citeseer, 2006, pp. 673--676.

\bibitem{hyvarinen2000independent}
A.~Hyv{\"a}rinen, E.~Oja, Independent component analysis: algorithms and
  applications, Neural networks 13~(4) (2000) 411--430.

\bibitem{maier2007allelic}
L.~M. Maier, D.~E. Anderson, P.~L. De~Jager, L.~S. Wicker, D.~A. Hafler,
  Allelic variant in ctla4 alters t cell phosphorylation patterns, Proceedings
  of the National Academy of Sciences 104~(47) (2007) 18607--18612.

\bibitem{pyne2009automated}
S.~Pyne, X.~Hu, K.~Wang, E.~Rossin, T.-I. Lin, L.~M. Maier, C.~Baecher-Allan,
  G.~J. McLachlan, P.~Tamayo, D.~A. Hafler, et~al., Automated high-dimensional
  flow cytometric data analysis, Proceedings of the National Academy of
  Sciences 106~(21) (2009) 8519--8524.

\bibitem{clauset2009power}
A.~Clauset, C.~R. Shalizi, M.~E. Newman, Power-law distributions in empirical
  data, SIAM review 51~(4) (2009) 661--703.

\bibitem{nandi1995extension}
A.~K. Nandi, D.~M{\"a}mpel, An extension of the generalized gaussian
  distribution to include asymmetry, Journal of the Franklin Institute 332~(1)
  (1995) 67--75.

\bibitem{tesei1998hos}
A.~Tesei, C.~S. Regazzoni, Hos-based generalized noise pdf models for signal
  detection optimization, Signal Processing 65~(2) (1998) 267--281.

\bibitem{azzalini1985class}
A.~Azzalini, A class of distributions which includes the normal ones,
  Scandinavian journal of statistics (1985) 171--178.

\bibitem{azzalini1996multivariate}
A.~Azzalini, A.~Dalla~Valle, The multivariate skew-normal distribution,
  Biometrika 83~(4) (1996) 715--726.

\bibitem{villani2006multivariate}
M.~Villani, R.~Larsson, The multivariate split normal distribution and
  asymmetric principal components analysis, Communications in
  Statistics—Theory and Methods 35~(6) (2006) 1123--1140.

\bibitem{gibbons1973estimation}
J.~Gibbons, S.~Mylroie, Estimation of impurity profiles in ion-implanted
  amorphous targets using joined half-gaussian distributions, Applied Physics
  Letters 22~(11) (1973) 568--569.

\bibitem{pascal2013parameter}
F.~Pascal, L.~Bombrun, J.-Y. Tourneret, Y.~Berthoumieu, Parameter estimation
  for multivariate generalized gaussian distributions, IEEE Transactions on
  Signal Processing 61~(23) (2013) 5960--5971.

\bibitem{tacconi1995new}
G.~Tacconi, A.~Tesei, C.~Regazzoni, A new hos-based model for signal detection
  in non-gaussian noise: an application to underwater acoustic communications,
  in: OCEANS'95. MTS/IEEE. Challenges of Our Changing Global Environment.
  Conference Proceedings., Vol.~1, IEEE, 1995, pp. 620--625.

\bibitem{miller1972detectors}
J.~Miller, J.~Thomas, Detectors for discrete-time signals in non-gaussian
  noise, IEEE Transactions on Information Theory 18~(2) (1972) 241--250.

\bibitem{lorenzo2006tucker}
U.~Lorenzo-Seva, J.~M. Ten~Berge, Tucker's congruence coefficient as a
  meaningful index of factor similarity, Methodology 2~(2) (2006) 57--64.

\bibitem{ica}
N.~E. Helwig, \href{http://CRAN.R-project.org/package=ica}{ica: Independent
  Component Analysis}, r package version 1.0-1 (2015).
\newline\urlprefix\url{http://CRAN.R-project.org/package=ica}

\bibitem{pearsonica}
J.~Karvanen, \href{https://CRAN.R-project.org/package=PearsonICA}{PearsonICA},
  r package version 1.2-3 (2008).
\newline\urlprefix\url{https://CRAN.R-project.org/package=PearsonICA}

\bibitem{prodenica}
T.~Hastie, R.~Tibshirani,
  \href{http://CRAN.R-project.org/package=ProDenICA}{ProDenICA: Product Density
  Estimation for ICA using tilted Gaussian density estimates}, r package
  version 1.0 (2010).
\newline\urlprefix\url{http://CRAN.R-project.org/package=ProDenICA}

\bibitem{tsBSS}
M.~Matilainen, J.~Miettinen, K.~Nordhausen, H.~Oja, S.~Taskinen,
  \href{http://CRAN.R-project.org/package=tsBSS}{tsBSS: Tools for Blind Source
  Separation for Time Series}, r package version 0.2 (2016).
\newline\urlprefix\url{http://CRAN.R-project.org/package=tsBSS}

\bibitem{fICA}
J.~Miettinen, K.~Nordhausen, H.~Oja, S.~Taskinen,
  \href{https://CRAN.R-project.org/package=fICA}{fICA: Classical, Reloaded and
  Adaptive FastICA Algorithms}, r package version 1.0-3 (2015).
\newline\urlprefix\url{https://CRAN.R-project.org/package=fICA}

\bibitem{ICtest}
K.~Nordhausen, H.~Oja, D.~E. Tyler, J.~Virta,
  \href{https://CRAN.R-project.org/package=ICtest}{ICtest: Estimating and
  Testing the Number of Interesting Components in Linear Dimension Reduction},
  r package version 0.2 (2016).
\newline\urlprefix\url{https://CRAN.R-project.org/package=ICtest}

\bibitem{cardoso1993blind}
J.-F. Cardoso, A.~Souloumiac, Blind beamforming for non-gaussian signals, in:
  Radar and Signal Processing, IEE Proceedings F, Vol. 140, IET, 1993, pp.
  362--370.

\bibitem{stuart1968advanced}
A.~Stuart, M.~G. Kendall, et~al., The advanced theory of statistics, Charles
  Griffin, 1968.

\bibitem{bach2002kernel}
F.~R. Bach, M.~I. Jordan, Kernel independent component analysis, Journal of
  machine learning research 3~(Jul) (2002) 1--48.

\bibitem{hastie2009elements}
T.~Hastie, R.~Tibshirani, J.~Friedman, The elements of statistical learning 2nd
  edition (2009).

\end{thebibliography}

\end{document}